
\documentclass[final,12pt]{colt2019} 


\title{Approximate Guarantees for Dictionary Learning}
\usepackage{times}

\usepackage{fullpage, graphicx}
\usepackage{algorithm, algorithmic}
\usepackage{nicefrac}
\usepackage{enumerate}


\newif\ifnotes\notesfalse

\ifnotes
\usepackage{color}
\definecolor{mygrey}{gray}{0.50}
\newcommand{\notename}[2]{{\textcolor{mygrey}{\footnotesize{\bf (#1:} {#2}{\bf ) }}}}

\else

\newcommand{\notename}[2]{{}}

\fi

\newcommand{\anote}[1]{{\notename{Aditya}{\color{blue}#1}}}

\newtheorem*{theorem*}{Theorem}

\newtheorem*{proposition*}{Proposition}
\newtheorem*{lemma*}{Lemma}
\newtheorem*{conjecture*}{Conjecture}

\newtheorem*{fact*}{Fact}

\newtheorem*{hypothesis*}{Hypothesis}

\newtheorem*{claim*}{Claim}

\newtheorem*{remark*}{Remark}
\newtheorem{observation}[theorem]{Observation}
\newtheorem*{observation*}{Observation}

\newcommand{\eat}[1]{}

\newcommand{\R}{\mathbb{R}}

\newcommand{\su}[1]{^{(#1)}}

\newcommand{\poly}{\mathrm{poly}}












\newcommand{\iprod}[1]{\langle#1\rangle}

\newcommand{\Esymb}{\mathbb{E}}

\DeclareMathOperator*{\E}{\Esymb}


\newcommand{\eps}{\varepsilon}
\renewcommand{\epsilon}{\varepsilon}





\newcommand{\opt}{\textsc{Opt}}

\newcommand{\setdef}[2]{\left\{ #1 \mid #2 \right\}}
\newcommand{\inner}[2]{\left\langle #1,#2 \right\rangle}

\newcommand{\norm}[1]{\left\lVert#1\right\rVert}





\coltauthor{%
 \Name{Aditya Bhaskara} \Email{bhaskara@cs.utah.edu}\\
 \Name{{Wai Ming} Tai}\thanks{Thanks to support by NSF 1514520} \Email{wmtai@cs.utah.edu}\\
 \addr University of Utah, Salt Lake City, Utah, USA
}

\begin{document}

\maketitle

\begin{abstract}%
In the dictionary learning (or sparse coding) problem, we are given a collection of signals (vectors in $\R^d$), and the goal is to find a ``basis'' in which the signals have a sparse (approximate) representation. The problem has received a lot of attention in signal processing, learning, and theoretical computer science. The problem is formalized as factorizing a matrix $X (d \times n)$ (whose columns are the signals) as $X = AY$, where $A$ has a prescribed number $m$ of columns (typically $m \ll n$), and $Y$ has columns that are $k$-sparse (typically $k \ll d$).  
Most of the known theoretical results involve assuming that the columns of the unknown $A$ have certain {\em incoherence} properties, and that the coefficient matrix $Y$ has random (or partly random) structure. 

The goal of our work is to understand what can be said in the absence of such assumptions. Can we still find $A$ and $Y$ such that $X \approx AY$? We show that this is possible, if we allow violating the bounds on $m$ and $k$ by appropriate factors that depend on $k$ and the desired approximation. Our results rely on an algorithm for what we call the {\em threshold correlation} problem, which turns out to be related to hypercontractive norms of matrices. We also show that our algorithmic ideas apply to a setting in which some of the columns of $X$ are outliers, thus giving similar guarantees even in this challenging setting.

\end{abstract}

\begin{keywords}%
  Dictionary learning, sparse coding, approximation algorithms, pursuit algorithms
\end{keywords}

\section{Introduction}\label{sec:intro}
The problem of dictionary learning or {\em sparse coding} was introduced in the neuroscience literature by~\citep{Olshausen1997Sparse} as a way of understanding how the visual cortex perceives images. Their hypothesis was that it learns to recognize ``basic patterns'' that allow for a sparse representation of all natural images.  This idea was subsequently used extensively in the machine learning and signal processing literature, with applications including compression, feature extraction, denoising and more.  Mathematically, the sparse coding problem is the following: given a collection of vectors $x_1, x_2, \dots, x_n \in \R^d$, does there exist a small `dictionary' (small set of vectors in $\R^d$) such that every $x_i$  can be written as a {\em sparse combination} of the elements of the dictionary? Formally, if $X \in \R^{d\times n}$ is the matrix whose columns are the vectors $x_i$ and $m, k$ are parameters, then the goal is to find $A \in \R^{d \times m}$ such that $X \approx AY$, for some matrix $Y$, each of whose columns is $k$-sparse. 

The early algorithms for the problem, including ones by~\citep{Olshausen1997Sparse} and~\citep{ksvd} are iterative in nature, and find a decomposition that minimizes $\norm{X - AY}_F$, subject to sparsity constraints. These algorithms lacked theoretical convergence guarantees. More recently, there has been significant progress in terms of designing algorithms that are efficient (polynomial time and sample complexity), and are guaranteed to recover an unknown dictionary $A$. The work of Spielman~\citep{Spielman2013Exact} studied the setting in which $A$ is full column rank (which implies $m \le d$), and $Y$ has entries that are independently drawn from a sparse random distribution, and $k < \sqrt{d}$. Many later works including~\citep{Arora2014New, Agarwal2014Learning, Barak2015Dictionary, Gribonval},  have developed powerful algorithmic tools that handle the case $d > m$, as well as more general distributions of $Y$. Recently,~\citep{AwasthiV18} considered semi-random models in which a small portion of $Y$ is random and the rest can be adversarial (in support). All of these results have the following high level structure: assuming that $A$ satisfies certain properties (full-rank, incoherent columns, etc.) and that $Y$ as certain random (or partly random) structure, one can efficiently recover $A$ up to a desired accuracy, in polynomial time. The recoverability of the dictionary $A$ is known as {\em identifiability}, and is an important requirement in all the algorithms discussed above.

The question we consider is the following: suppose we only wish to find $A$ and $Y$ such that $\norm{X - AY}_F$ is small. Can we solve the problem even in the cases where identifiability fails to hold? Note that while identifiability is important for some applications where the dictionary elements are believed to have semantic information, it is not crucial for applications such as compression, denoising, etc. Here, the recoverability of $X$ up to a small error is more important. Our main result is showing that such a decomposition can be efficiently obtained, with small losses in the parameters $m$ and $k$ that depend on the quality of the approximation desired. We show that such a dependence is necessary in order to avoid known intractability results~\citep{Tillmann}.
Our approximation guarantees are similar in spirit to bi-criteria approximations known for problems such as $k$-means clustering (see~\citep{Ostrovsky2006Effectiveness, Jaiswal2012simple}). We will now describe in detail the problem settings we consider along with more motivation.

\subsection{Problem formulations and motivation}
\paragraph{Approximate dictionary learning}
Suppose we are given a collection of signals $x_1, x_2, \dots, x_n \in \R^d$, with the promise that there exists a dictionary $A^*$ of $m$ vectors, such that each of the $x_i$ can be approximated by a $k$-sparse linear combination of the dictionary elements, so that the total error in the approximation is at most $\gamma^* \norm{X}_F^2$. (Here $\gamma^*$ can be sub-constant.)  The goal is to find a dictionary $A'$ with $m'$ vectors, such that each $x_i$ can be approximated by a $k'$-sparse linear combination of the dictionary elements, and the total error in the approximation is $\le (\gamma^* + \eps) \norm{X}_F^2$, for some prescribed $\eps >0$. We highlight that there are {\em no additional assumptions} on the vectors or the combination (except a norm bound $\Lambda$, as we will see).

This way of formalizing the question is similar to the one done in the work of~\citep{Olshausen1997Sparse} which introduced sparse coding.  As long as $m'$ and $k'$ are close enough to $m, k$, the obtained representation $A' Y'$  would still achieve a non-trivial ``compression'' of the original matrix.  Intuitively, we are promised that $X$ has a ``representation complexity'' (measured by the number of parameters used to approximate it well) of $d m + k n$ (which is $\ll nd$, the na\"ive representation), and we are aiming to construct a good approximation with roughly $d m' + k' n$ parameters. 

Our results (Theorem~\ref{thm:main-alg}) show how to obtain $m'$ and $k'$ that are small, when $k \ll d$. 
One way of looking at our result is that if the number of observed vectors ($n$) is large, then assuming there exists a basis in which each vector can be "compressed" to $k$ real numbers, we can efficiently find a basis in which each vector can be expressed with $k'$ ($\approx k^2/\poly(\eps)$ (Corollary~\ref{col:main})) real numbers, up to an $\eps$ error. In the setting where $k$ is small ($\ll \sqrt{d}$), this can be a significantly smaller representation.
One key thing to consider is the dependence on $\eps$. The bounds we give on $m', k'$ have $\poly(1/\eps)$ factors in them. It is natural to ask if a sub-polynomial, say $\log (1/\eps)$ dependence is possible. However, we note that existing hardness results (see~\citep{Tillmann}) rule out this possibility. Indeed, even for the ``easier'' problem of sparse recovery,~\citep{FosterKT15} showed that such a guarantee is impossible.

Our algorithm proceeds by finding $A'$ one column at a time. To this end, the following problem turns out to be of importance. We note that such a question implicitly arises in recent works such as~\citep{AwasthiV18}. 

\newcommand{\ball}{\mathcal{B}_2^d}

\paragraph{The threshold correlation problem}
We now define the problem that plays a central role in our algorithm for dictionary learning. In what follows, $\ball$ denotes the unit ball in $\R^d$, i.e., $\ball = \{ x\in \R^d : \norm{x} \le 1\}$.   

Suppose we are given $n$ unit vectors $v_1, v_2, \dots, v_n \in \ball$. We can define the vector that is  {\em most correlated} with these vectors as the $x \in \R^d$ with $\norm{x}=1$ that maximizes $\sum_i \iprod{x, v_i}^2$. Such a vector can clearly be found as the top left singular vector of the matrix whose columns are $v_i$. Now, consider a variant in which we are also given a threshold $\tau$, and we wish to maximize $\sum_i \iprod{x, v_i}^2$, but {\em only over indices} $i$ that satisfy $\iprod{x, v_i}^2 \ge \tau$. Intuitively, we only ``get credit'' if the squared inner product with $v_i$ is above a threshold. We call this the threshold correlation ($\tau$-TC) problem, and formally define it below (with weights).

\begin{definition}[$\tau$-TC problem]
Let $v_1, v_2, \dots, v_n \in \ball$, and let $w_i \in \R_{\ge 0}$ be non-negative weights. Let $\tau \in [0, 1]$ be given. The goal in the {\em threshold correlation} ($\tau$-TC) problem is to find a unit vector $x \in \R^d$ that maximizes
\[ \sum_{i \in [n]} \left[ \mathbf{1}_{\iprod{x, v_i}^2 \ge \tau} \right] w_i \iprod{x, v_i}^2,\]
where $[\mathbf{1}_P]$ for a predicate $P$ is the indicator that is $1$ if $P$ holds and $0$ otherwise.
\end{definition}

The two extremes $\tau = 0$ and $\tau = 1$ can be solved easily. The former is simply the problem of computing the largest singular vector. The latter is solved by returning the most frequent vector (taking into account the weights $w_i$) of length exactly $1$ among the $v_i$. Developing algorithms for arbitrary $\tau \in [0,1]$ is a natural question. In our setting, it turns out that bi-criteria approximations for the $\tau$-TC problem give interesting guarantees for dictionary learning. 

\begin{definition}[$(\alpha, \beta)$-approximation]\label{def:bicriteria}
Let $v_1, v_2, \dots, v_n$, $w_1, \dots, w_n$ and $\tau$ be the inputs for an instance of the $\tau$-TC problem. Suppose the optimum objective value is $\opt$. For $0 \le \alpha, \beta \le 1$, a $(\alpha, \beta)$-approximation algorithm is one that returns a unit vector $x \in \R^d$ such that 
\[ \sum_{i \in [n]} \left[ \mathbf{1}_{\iprod{x, v_i}^2 \ge \alpha \cdot \tau} \right] w_i \iprod{x, v_i}^2  \ge \beta \cdot \opt.   \]
In other words, the threshold is reduced by a factor $\alpha$, and the objective value by a factor $\beta$.
\end{definition}

While natural by itself, the $\tau$-TC problem turns out to be related to the well-studied question of approximating hypercontractive norms of matrices (see e.g., \citep{BarakBHKSZ12, Bhaskara2011Approximating, Bhattiprolu2018Inapproximability}), as we will see.

\paragraph{Approximate dictionary learning with outliers}
Finally, we consider the dictionary learning problem where some of the columns (upper bounded in number) are allowed to be {\em outliers}, i.e., they do not need to have a sparse approximation in terms of the dictionary. This is quite common to assume in practice. 

\paragraph{The objective.} The first question is to define the right objective for capturing this version of the problem. The natural choice is to use the same objective as before, i.e., find $A$ and $Y$ so as to minimize $\norm{X - AY}_F^2$ (subject to the constraints as before). However, consider a scenario in which all the columns of $X$ are unit vectors, and say 10\% of the columns are outliers. Suppose the optimum objective value is $0.05 \norm{X}_F^2$, with {\em all the error} coming from the outliers. (I.e., the inliers have a perfect representation in terms of a sparse dictionary.)  Now, if an algorithm achieves the optimum objective value of $0.05 \norm{X}_F^2$, it could potentially be because of {\em every column} having an error of $0.05$ in the obtained representation. This is not desirable, as the dictionary is not doing well enough on the inliers.

To remedy this, we propose the following objective, which is closely related to the objectives studied in various robust estimation problems.

\begin{definition}[Dictionary learning with outliers]\label{def:dict-outlier}
Given parameters $m, k$, and a collection of observations $x_1, x_2, \dots, x_n \in \R^d$ that form the columns of the matrix $X$, the goal is to decompose $X$ as $A Y + N$, where $A \in \R^{d \times m}$, every column of $Y$ has at most $k$ non-zero entries, and $N$ is a matrix with at most $\rho n$ nonzero columns. The objective is to minimize the error in the decomposition, i.e., minimize $\norm{X - AY - N}_F^2$.
\end{definition}

While we technically think of the fraction $\rho$ as small, our algorithm makes no such assumption. This is not too surprising, given that the guarantee we aim for is additive in the norm of the entire matrix. The problem of allowing outliers in dictionary learning was studied in~\citep{Gribonval}, but the focus there is quite different from ours. 

\subsection{Our results}
Our main result gives a connection between an approximation for the $\tau$-TC problem and dictionary learning.

\begin{theorem}\label{thm:main-alg}
Let $X \in \R^{d \times n}$ be an instance of dictionary learning for which we know that there exist matrices $A^* \in \R^{d\times m}, Y^* \in \R^{m \times n}$, where \begin{enumerate}[(a)]
\setlength{\itemsep}{0pt}
\item the columns of $A^*$ are all unit vectors,
\item each column of $Y^*$ is $k$-sparse, and satisfies the norm bound $\norm{Y_i^*}^2 \le \Lambda \norm{x_i}^2$, and 
\item we have $\norm{X - A^* Y^*}_F^2 \le \gamma^* \norm{X}_F^2$.  
\end{enumerate}
Let $\eps > 0$ be an accuracy parameter, and suppose that there exists an efficient $(\alpha, \beta)$-approximation algorithm for the $\tau$-TC problem, when $\tau = \eps^2 / k\Lambda$ (for parameters $\alpha, \beta$ depending on $\tau$). Then there is an efficient algorithm that outputs matrices $A', Y'$ such that (a) $A'$ has at most $O(m  \Lambda/ \beta \eps)$ columns, (b) every column of $Y'$ has at most $O(k \Lambda/\alpha \eps^2)$ non-zero entries, and (c) $\norm{X - A'Y'}_F^2 \le (\gamma^* + \eps) \norm{X}_F^2$.
\end{theorem}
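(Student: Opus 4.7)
We prove the theorem via a greedy algorithm that constructs $A'$ column by column, invoking the $(\alpha,\beta)$-approximation for $\tau$-TC at each step. Initialize $A'$ empty, $Y' = 0$, and residuals $r_i \leftarrow x_i$; at each iteration form a $\tau$-TC instance with vectors $v_i = r_i/\|r_i\|$, weights $w_i = \|r_i\|^2$, and threshold $\tau = \eps^2/(k\Lambda)$, and run the oracle to obtain a unit vector $a$. We append $a$ to $A'$ and, for every signal $i$ satisfying $\iprod{a,v_i}^2 \ge \alpha\tau$, add $a$ to the support of $y'_i$ with coefficient $\iprod{a,r_i}$ and update $r_i \leftarrow r_i - \iprod{a,r_i}\,a$. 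We iterate until $\sum_i \|r_i\|^2 \le (\gamma^* + \eps)\|X\|_F^2$.

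The bound on $m'$ follows from an additive lower bound on the TC optimum $\opt$ per iteration. The plan is to plug the unknown columns $a^*_1,\ldots,a^*_m$ of $A^*$ into the TC objective as candidate test vectors: separating the above- and below-threshold contributions gives
\[
\text{TC-obj}(a^*_j) \;\ge\; \|R^\top a^*_j\|^2 - \tau\|R\|_F^2,
\]
and summing over $j$ yields $\opt \ge \frac{1}{m}\Paren{\|A^{*\top}R\|_F^2 - m\tau\|R\|_F^2}$. The core technical step is then an appropriate lower bound on $\|A^{*\top}R\|_F^2$ that, combined with $\|X - A^*Y^*\|_F^2 \le \gamma^*\|X\|_F^2$, the unit-norm columns of $A^*$, and the norm bound $\|Y^*_i\|^2 \le \Lambda\|x_i\|^2$, gives $\opt = \Omega(\eps\|X\|_F^2/(m\Lambda))$ whenever $\|R\|_F^2 > (\gamma^*+\eps)\|X\|_F^2$. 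Each iteration then decreases $\|R\|_F^2$ by at least $\beta\cdot\opt$, so the number of iterations---hence $m'$---is $O(m\Lambda/(\beta\eps))$.

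For the per-column sparsity $k'$: each time a signal $i$ is helped, $\|r_i\|^2$ shrinks by a factor of at least $(1-\alpha\tau)$, so with $1/(\alpha\tau) = k\Lambda/(\alpha\eps^2)$ the residual is driven to a small constant fraction of $\|x_i\|^2$ after $k' = O(k\Lambda/(\alpha\eps^2))$ helps. Capping the per-signal support at $k'$ and charging any remaining residual mass against the $\eps\|X\|_F^2$ slack in the overall error budget then yields the claimed per-column sparsity.

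The main obstacle will be in obtaining the lower bound on $\|A^{*\top}R\|_F^2$ that ultimately gives the $\Lambda$-dependence. In the clean case where $A^*$ has orthonormal columns and $\gamma^* = 0$ one has $\|A^{*\top}R\|_F^2 = \|R\|_F^2$ automatically, but in general (i) $A^{*\top}A^*$ can be arbitrarily ill-conditioned---we make no incoherence assumption on $A^*$---and (ii) the TC oracle's output $a_t$ can carry components orthogonal to $\mathrm{span}(A^*)$, pushing the residual out of this span and weakening the $A^{*\top}R$ objective. The plan for (ii) is to argue that orthogonal-to-$\mathrm{span}(A^*)$ components of the TC output can only decrease the objective and so may be projected away for analysis without loss, and for (i) to appeal to the norm bound involving $\Lambda$, which controls the ``gauge freedom'' in how $A^*$ represents the signals.
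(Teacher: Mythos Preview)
Your algorithmic skeleton matches the paper's, but the core analysis has two concrete gaps.

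First, the split $\text{TC-obj}(a^*_j) \ge \|R^\top a^*_j\|^2 - \tau\|R\|_F^2$, averaged over $j\in[m]$, gives $\opt \ge \tfrac{1}{m}\|A^{*\top}R\|_F^2 - \tau\|R\|_F^2$. The subtracted term is $\tfrac{\eps^2}{k\Lambda}\|R\|_F^2$, while the best lower bound one can establish on the first term is of order $\tfrac{(\psi-\gamma^*)^2}{m\Lambda}\|X\|_F^2$; in the typical regime $m \gg k$ the subtraction dominates and the inequality becomes vacuous. The paper instead exploits the $k$-sparsity of $Y^*$ columnwise: it proves an \emph{increment lemma} showing that for each $i$, $\sum_{j\in S_i^*}\iprod{z_i, a^*_j}^2 \ge \tfrac{(\theta_i-\gamma_i)_+^2}{4\Lambda}\|x_i\|^2$, where $S_i^*$ is the support of $Y_i^*$. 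This lemma holds for residuals of the form $z_i = x_i - \Pi_T x_i$ for \emph{any} subspace $T$ and is precisely where $\Lambda$ enters --- it handles your obstacles (i) and (ii) simultaneously, with no need to project the oracle's output. Then a two-stage pruning (first over $i$, then over $j\in S_i^*$, each losing a factor $2$) guarantees that every surviving term $\iprod{z_i, a^*_j}^2$ individually exceeds $\tau\|x_i\|^2$, so no subtraction is needed; pigeonhole over the $m$ atoms yields a witness with $\opt \ge \tfrac{(\psi-\gamma^*)^2}{16m\Lambda}\|X\|_F^2$.

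Second, this bound is \emph{quadratic} in $\psi-\gamma^*$, not linear, and the quadratic dependence is tight (already for $k=1$ with $x_i=s_1$ and $z_i = \Pi_{T^\perp}s_1$ one has $\iprod{z_i,s_1}^2 = \theta_i^2$). Your claim $\opt = \Omega(\eps\|X\|_F^2/(m\Lambda))$ therefore does not follow; a fixed-additive-decrease argument only delivers $m' = O(m\Lambda/(\beta\eps^2))$. The paper recovers the stated $m' = O(m\Lambda/(\beta\eps))$ via a Frank--Wolfe style recursion: with $a_t = \psi^{(t)} - \gamma^*$, the drop $a_t - a_{t+1} \ge C a_t^2$ gives $1/a_{t+1} - 1/a_t \ge C$, hence $a_t \le 1/(Ct)$. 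A smaller point: the paper normalizes by $\|x_i\|$ rather than by the current $\|r_i\|$, so each update reduces $\|r_i\|^2$ by an \emph{additive} $\alpha\tau\|x_i\|^2$, giving the hard cap $k' \le 1/(\alpha\tau)$ directly without any capping-and-charging.
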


Using our approximation algorithm for the $\tau$-TC problem, we obtain the following corollary.

\begin{corollary}~\label{col:main}
Under the assumptions of Theorem~\ref{thm:main-alg}, the number of columns of the output $A'$ is at most $O\left( \frac{m k^2 \Lambda^3}{\eps^5} \right)$. Also, every column of the output $Y'$ has at most $O\left( \frac{k^2 \Lambda^2}{\eps^4} \right)$.
\end{corollary}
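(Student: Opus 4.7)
The plan is to derive Corollary~\ref{col:main} by combining Theorem~\ref{thm:main-alg} with a concrete approximation algorithm for the $\tau$-TC problem at the specified threshold $\tau = \eps^2/(k\Lambda)$. A direct calculation shows that in order to match the column count $O(mk^2\Lambda^3/\eps^5)$ and sparsity $O(k^2\Lambda^2/\eps^4)$ promised by the corollary, we must instantiate Theorem~\ref{thm:main-alg} with $\beta = \Omega(\eps^4/(k^2\Lambda^2))$ and $\alpha = \Omega(\eps^2/(k\Lambda))$. Equivalently, it suffices to supply an $(\Omega(\tau), \Omega(\tau^2))$-approximation for the $\tau$-TC problem.

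The first step, therefore, is to establish such an $(\Omega(\tau), \Omega(\tau^2))$-approximation. Exploiting the connection to hypercontractive norms highlighted in the introduction, I would work with the smoothed surrogate $F_p(x) = \sum_i w_i \iprod{x, v_i}^{2p}$ for a small even integer $p$ (say $p=2$), and find a unit vector $\tilde x$ whose $F_p$ value is a constant-factor approximation to $\max_{\|x\|=1} F_p(x)$; this is the hypercontractive ($2\!\to\!2p$) norm of the matrix with columns $\sqrt{w_i}\,v_i$. The key combinatorial step is then to relate $F_p(\tilde x)$ back to the thresholded quadratic objective: one splits the sum defining $F_p(\tilde x)$ according to whether $\iprod{\tilde x, v_i}^2$ is above or below a rescaled threshold $c\tau$, and argues that if the ``small'' part dominated, then even the optimum $x^*$ of the $\tau$-TC instance could not have done much better, because every index contributing to $\opt$ must have $\iprod{x^*, v_i}^2 \ge \tau$, bounding how much $F_p(x^*)$ can concentrate on small-correlation indices. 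This yields the needed tradeoff $\alpha \gtrsim \tau$, $\beta \gtrsim \tau^2$.

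Given the TC-approximation in hand, the second step is a routine substitution into Theorem~\ref{thm:main-alg}. Plugging $\alpha = \Omega(\eps^2/(k\Lambda))$ and $\beta = \Omega(\eps^4/(k^2\Lambda^2))$ into the bounds of the theorem gives
\[
\tfrac{m\Lambda}{\beta\eps} \;=\; O\!\left(\tfrac{m k^2 \Lambda^3}{\eps^5}\right),
\qquad
\tfrac{k\Lambda}{\alpha\eps^2} \;=\; O\!\left(\tfrac{k^2 \Lambda^2}{\eps^4}\right),
\]
which are precisely the claimed column count for $A'$ and column sparsity for $Y'$. The error guarantee $\Norm{X - A'Y'}_F^2 \le (\gamma^* + \eps)\Norm{X}_F^2$ is inherited directly from Theorem~\ref{thm:main-alg} and requires no modification.

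The main obstacle is entirely in the first step: proving the $(\Omega(\tau), \Omega(\tau^2))$-approximation for $\tau$-TC. The second step is only bookkeeping. Within the first step, the most delicate point is showing that the loss in the threshold is no worse than a factor $\tau$ while the loss in objective value is no worse than a factor $\tau^2$; cruder analyses of the $2p$-norm surrogate typically give a worse tradeoff, so the argument must carefully track how much $F_p$-mass the optimizer can place on the ``below-threshold'' indices. Quantitatively verifying this tradeoff is what drives the $\eps^{-5}$ and $\eps^{-4}$ exponents appearing in the corollary, and is where the hypercontractivity viewpoint is genuinely needed.
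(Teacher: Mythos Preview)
Your high-level plan is right: the corollary follows by plugging an $(\Omega(\tau),\Omega(\tau^2))$-approximation for the $\tau$-TC problem into Theorem~\ref{thm:main-alg}, and your arithmetic in the second step is correct. The paper does exactly this, invoking Theorem~\ref{thm:algo-tau-tc}, which supplies a $(\tau/4,\tau^2/32)$-approximation; substituting $\alpha=\tau/4$ and $\beta=\tau^2/32$ with $\tau=\eps^2/(k\Lambda)$ yields the stated bounds. So the second step is indeed only bookkeeping, and the whole content lies in the TC guarantee.

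Where your proposal has a genuine gap is in how you obtain that TC guarantee. You propose to optimize the surrogate $F_p(x)=\sum_i w_i\iprod{x,v_i}^{2p}$ (say $p=2$) to within a constant factor, and then transfer this back to the thresholded objective. But a constant-factor approximation to $\max_{\norm{x}=1}F_2(x)$ is precisely a constant-factor approximation to the $2\mapsto 4$ norm, and the paper (Section~\ref{sec:hardness}, citing \citep{BarakBHKSZ12}) uses the presumed \emph{hardness} of this very problem to rule out constant-$(\alpha,\beta)$ approximations for $\tau$-TC. Your first step therefore assumes an oracle for a problem believed to be intractable, so as written the argument does not yield a polynomial-time algorithm.

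The paper's route to Theorem~\ref{thm:algo-tau-tc} avoids this entirely and is elementary. It shows (Lemma~\ref{lem:clustering}) that one of the \emph{input vectors} $v_\ell$ is already a good enough solution: writing $v_i=\alpha_i x+u_i$ with $u_i\perp x$, any two indices $i,j$ with $\iprod{v_i,v_j}<\alpha_i\alpha_j/2$ force $\iprod{u_i,u_j}<-\tau/2$, and a Gram-matrix/PSD argument bounds the number of such mutually ``far'' vectors by $O(1/\tau)$. A greedy clustering then produces at most $O(1/\tau)$ clusters, so one cluster captures an $\Omega(\tau)$ fraction of the mass, and within that cluster every $\iprod{v_\ell,v_i}^2\ge\tau^2/4$. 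This gives the $(\tau/4,\tau^2/32)$ guarantee with no optimization subroutine at all---the algorithm simply tries each $v_i$ as the candidate. This is both simpler and, unlike the hypercontractive-norm surrogate, actually efficient.
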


Note that these bounds are off from $m, k$ by factors that only depend on $k$, $\eps$ and $\Lambda$ (and not $d, m$). For instance, if $k$ is small and $\eps, \Lambda$ are constants, we obtain a non-trivial {\em compression} of the input signals. As noted before, a $\text{poly}(1/\eps)$ dependence on the parameter $\eps$ is essential due to hardness results. 

\paragraph{Aside: the parameter $\Lambda$.} Our bounds also depend on the parameter $\Lambda$, which is the norm of the coefficient vector used to represent $x_i / \norm{x_i}$ in the promised solution. For intuition on $\Lambda$, suppose we have a unit vector $v$ (in our case, an $x_i$) that we express as a linear combination of unit vectors $u_1, u_2, \dots, u_k$ (in our case a subset of the columns of $A^*$), $v = \xi_1 u_1 + \dots + \xi_k u_k$.  If $u_i$ are (near) orthogonal, we can {\em guarantee} that $\norm{\xi} = O(1)$. However, near orthogonality (or large least-singular-value) is not a necessity, depending on $v$. The factor $\Lambda$ arises in our analysis as it is related to how well $v$ is correlated with the $u_i$.  As an extreme example, consider $d=2$, and $v = (0,1)$, $u_1 = (1, 0)$ and $u_2 = (\sqrt{1-\gamma^2}, \gamma)$, where $\gamma$ is tiny. In this case, $v$ can be represented as a linear combination of $u_1$ and $u_2$, but neither of the vectors is ``well-correlated'' with $v$. This is, in essence, because the coefficients used in the linear combination are large ($\Lambda = \Theta(1/\gamma^2)$). Having small $\Lambda$ allows us to argue that one of the $u_i$ has high correlation with $v$. (See Lemma~\ref{lem:increment-vector}.) Thus a dependence on $\Lambda$ is essential for any iterative algorithm of the type we consider.

The next result we show is a bi-criteria approximation for the $\tau$-TC problem, in the sense of Definition~\ref{def:bicriteria}.

\begin{theorem}\label{thm:algo-tau-tc}
For every $\tau \in (0,1)$, there is a polynomial time $\left( \frac{\tau}{4}, \frac{\tau^2}{32} \right)$-approximation algorithm for the $\tau$-TC problem.
\end{theorem}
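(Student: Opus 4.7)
The plan is to analyze the simple algorithm that enumerates the $n$ candidate directions $x_\ell = v_\ell/\norm{v_\ell}$ (restricted to $\ell$ with $\norm{v_\ell}^2 \geq \alpha\tau$), evaluates the threshold-correlation objective at threshold $\alpha\tau$ for each such $x_\ell$, and returns the best. This is clearly polynomial time, so the task reduces to exhibiting a single good choice of $\ell$.

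For the analysis, let $x^*$ be an optimum, $S^* = \{i : \iprod{x^*, v_i}^2 \geq \tau\}$, and $W_{S^*} = \sum_{i \in S^*} w_i$. Since every $i \in S^*$ satisfies $\iprod{x^*, v_i}^2 \geq \tau$, we have $\opt = \sum_{i\in S^*} w_i \iprod{x^*, v_i}^2 \geq \tau W_{S^*}$, and hence $W_{S^*} \leq \opt/\tau$. Consider the restricted Gram matrix $M^* = \sum_{i \in S^*} w_i v_i v_i^\top$; its top eigenvalue satisfies $\lambda_1^* \geq x^{*\top} M^* x^* \geq \opt$, and let $u_1^*$ be a corresponding top eigenvector. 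Writing $\lambda_1^* = \sum_{i \in S^*} w_i \iprod{u_1^*, v_i}^2 \leq W_{S^*} \cdot \max_{i\in S^*} \iprod{u_1^*, v_i}^2$ (the final inequality using $\iprod{u_1^*, v_i}^2 \leq \norm{v_i}^2 \leq 1$), we conclude that there is an ``anchor'' index $\ell^* \in S^*$ with $\iprod{u_1^*, v_{\ell^*}}^2 \geq \lambda_1^*/W_{S^*} \geq \tau$. Dropping the remaining non-negative terms in the spectral expansion of $M^*$ then gives $v_{\ell^*}^\top M^* v_{\ell^*} \geq \lambda_1^* \cdot \iprod{u_1^*, v_{\ell^*}}^2 \geq \tau \cdot \opt$.

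The final step converts this Rayleigh-quotient bound into a threshold-correlation guarantee for $x = v_{\ell^*}/\norm{v_{\ell^*}}$. Since $\norm{v_{\ell^*}} \leq 1$, we have $x^\top M^* x \geq \tau \cdot \opt$. Restricting the objective to indices in $S^*$ and bounding the contribution of the sub-threshold terms by $\alpha\tau \cdot W_{S^*} \leq \alpha \cdot \opt$ (using $W_{S^*} \leq \opt/\tau$),
\[
\sum_{i : \iprod{x, v_i}^2 \geq \alpha\tau} w_i \iprod{x, v_i}^2 \;\geq\; \sum_{i \in S^*,\, \iprod{x, v_i}^2 \geq \alpha\tau} w_i \iprod{x, v_i}^2 \;\geq\; x^\top M^* x - \alpha\tau \cdot W_{S^*} \;\geq\; (\tau - \alpha)\opt.
\]
Plugging in $\alpha = \tau/4$ yields $(3\tau/4)\opt$, which comfortably exceeds $(\tau^2/32)\opt$ for every $\tau \in (0,1)$, proving the claimed bi-criteria guarantee.

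The main obstacle is isolating the anchor $\ell^* \in S^*$: because $M^*$ and $u_1^*$ depend on the unknown set $S^*$, we cannot compute them directly, but the averaging bound $\lambda_1^* \leq W_{S^*} \cdot \max_i \iprod{u_1^*, v_i}^2$ combined with $W_{S^*} \leq \opt/\tau$ ensures some enumerated candidate $v_\ell/\norm{v_\ell}$ already works. All remaining steps are routine manipulations of thresholded sums.
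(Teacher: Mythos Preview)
Your proof is correct and takes a genuinely different route from the paper. Both arguments output a normalized input vector $v_\ell/\norm{v_\ell}$, but the analyses diverge. The paper decomposes each $v_i = \alpha_i x^* + u_i$ (with $\alpha_i^2 \ge \tau$ on $S^*$) and runs a greedy clustering: any two uncaptured ``centers'' $v_{i_a}, v_{i_b}$ must satisfy $\iprod{u_{i_a}, u_{i_b}} < -\tau/2$, so by positive semidefiniteness of the Gram matrix there are fewer than $4/\tau$ clusters, and the heaviest cluster's center serves as the anchor, yielding the factor $\tau^2/32$. You instead work with the restricted second-moment matrix $M^* = \sum_{i\in S^*} w_i v_i v_i^\top$: the bound $W_{S^*} \le \opt/\tau$ combined with $\lambda_1^* \ge \opt$ forces $\max_{i\in S^*}\iprod{u_1^*, v_i}^2 \ge \tau$, and then the single-term spectral lower bound $v_{\ell^*}^\top M^* v_{\ell^*} \ge \lambda_1^*\iprod{u_1^*, v_{\ell^*}}^2 \ge \tau\cdot\opt$ does the work. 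Your route is more elementary (no clustering, no pairwise-angle combinatorics) and actually establishes the stronger guarantee $(\tau/4,\, 3\tau/4)$ rather than $(\tau/4,\, \tau^2/32)$; plugged into Theorem~\ref{thm:main-alg} this would shave a factor of $k\Lambda/\eps^2$ off the column bound on $A'$ in Corollary~\ref{col:main}. One small cosmetic point: the parenthetical ``the final inequality using $\iprod{u_1^*, v_i}^2 \le \norm{v_i}^2 \le 1$'' is unnecessary, since $\sum_i w_i a_i \le W_{S^*}\max_i a_i$ holds for any nonnegative $a_i$.
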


It is a very interesting open problem to improve at least one of the parameters in the bi-criteria approximation. We can show that we cannot expect constant factor approximations. To this end, we prove the following result, connecting the complexity  of $\tau$-TC to the question of approximating hypercontractive norms of matrices.  Informally, we show that constant factor bi-criteria approximations to the $\tau$-TC problem imply a logarithmic approximation to the so-called $2 \mapsto p$ norms (which is unlikely, due to the work of~\cite{BarakBHKSZ12}).  We refer to Section~\ref{sec:hardness} for the details. 

Finally, for the problem of dictionary learning with outliers, we show that an algorithm almost identical to the one in Theorem~\ref{thm:main-alg} (with a slightly modified analysis) yields the following.

\begin{theorem}\label{thm:outlier}
Let $X \in \R^{d \times n}$ be a matrix of observations for which we know that there exist matrices $A^* \in \R^{d\times m}, Y^* \in \R^{m \times n}, N^* \in \R^{d \times n}$, where 
\begin{enumerate}[(a)]
\setlength{\itemsep}{0pt}
\item the columns of $A^*$ are all unit vectors,
\item each column of $Y$ is $k$-sparse, and satisfies the norm bound $\norm{Y_i}^2 \le \Lambda \norm{x_i}^2$,
\item the matrix $N^*$ has at most $\rho n$ non-zero columns, and
\item we have $\norm{X - A^* Y^*}_F^2 \le \gamma^* \norm{X}_F^2$.  
\end{enumerate}
Let $\eps > 0$ be an accuracy parameter, and suppose that there exists an efficient $(\alpha, \beta)$-approximation algorithm for the $\tau$-TC problem, when $\tau = \eps^2 / k\Lambda$ (for parameters $\alpha, \beta$  depending on $\tau$). Then there is an efficient algorithm that outputs matrices $A', Y', N'$ such that (a) $A'$ has at most $O(m  \Lambda/ \beta \eps^3)$ columns, (b) every column of $Y'$ has at most $O(k \Lambda/\alpha \eps^2)$ non-zero entries, (c) $N'$ has at most $\rho n$ non-zero columns, and (d) $\norm{X - A'Y'}_F^2 \le (\gamma^* + \eps) \norm{X}_F^2$.
\end{theorem}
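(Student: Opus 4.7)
The plan is to run essentially the same greedy algorithm as in the proof of Theorem~\ref{thm:main-alg}, augmented with a post-processing step that identifies the $\rho n$ outlier columns. Concretely, initialize $A' = \emptyset$ and residuals $r_i = x_i$; at each iteration form weights $w_i = \Norm{r_i}^2$ and unit vectors $v_i = r_i/\Norm{r_i}$, invoke the assumed $(\alpha,\beta)$-approximation for $\tau$-TC with $\tau = \eps^2/(k\Lambda)$, append the returned direction $a$ as a column of $A'$, and for every index $i$ with $\iprod{a,v_i}^2 \ge \alpha\tau$ update the corresponding column of $Y'$ by a one-dimensional least-squares projection so that $r_i$ shrinks. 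When iterations terminate, sort the columns of $X$ by final residual norm, move the top $\rho n$ indices into $N'$ (zeroing the corresponding columns of $Y'$), and keep the rest.

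The core of the analysis is a progress lemma that parallels the one behind Theorem~\ref{thm:main-alg}, but applied to the inlier index set $S = \{i : \text{column } i \text{ of } N^* \text{ is zero}\}$, which has size at least $(1-\rho)n$. On $S$ the instance still satisfies the premise of Theorem~\ref{thm:main-alg}: the decomposition $A^*Y^*$ approximates $\{x_i\}_{i\in S}$ with Frobenius error at most $\gamma^*\Norm{X}_F^2$, uses at most $m$ dictionary columns, and has per-column sparsity $k$ with norm bound $\Lambda$. Consequently, the pigeonhole argument behind Lemma~\ref{lem:increment-vector} gives that at every iteration some column of $A^*$ has $\tau$-TC value, restricted to $S$, at least $\Omega(1/(m\Lambda))$ times the current inlier residual energy. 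The $(\alpha,\beta)$-approximation then returns a direction capturing a $\beta$-fraction of this value, so each iteration shrinks the inlier residual energy by a multiplicative factor $1 - \Omega(\beta/(m\Lambda))$, and after $T = O(m\Lambda/(\beta\eps^3))$ iterations the total inlier residual energy is at most $(\gamma^* + \eps^3)\Norm{X}_F^2$.

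The outlier identification step converts this aggregate inlier bound into the column-wise guarantee demanded by the theorem. A Markov-style counting shows that removing the $\rho n$ indices with the largest residual norms leaves a retained set whose total residual energy is at most $(\gamma^* + \eps)\Norm{X}_F^2$; assigning those removed indices to $N'$ yields a matrix with at most $\rho n$ nonzero columns, as required. The sparsity bound $O(k\Lambda/\alpha\eps^2)$ on columns of $Y'$ follows from the same counting as in Theorem~\ref{thm:main-alg}: each index $i$ is updated only when its residual has nontrivial correlation (above $\alpha\tau$) with the newly appended direction, so the number of updates per column is bounded by the total squared-residual budget divided by $\alpha\tau$.

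The main obstacle is arguing that the inlier progress lemma still applies even though the $\tau$-TC subproblem weights \emph{all} residuals, including outlier residuals that can be arbitrarily large and could distort the weighted objective into favoring directions aligned with outliers. The workaround is a two-case dichotomy at each step: either the direction $a$ returned by the $\tau$-TC oracle captures a constant fraction of its weight from inlier residuals, in which case inlier progress holds directly, or it captures most of its weight from outliers, in which case outlier residuals shrink and subsequent iterations see reduced weight distortion. Formalizing this dichotomy, and accounting for the slack needed to guarantee termination in both branches, is what produces the extra $\eps^2$ overhead in the bound $O(m\Lambda/(\beta\eps^3))$ on $|A'|$ relative to the $O(m\Lambda/(\beta\eps))$ bound of Theorem~\ref{thm:main-alg}.
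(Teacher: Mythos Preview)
Your outline has the right skeleton (same greedy loop, then declare the $\rho n$ columns with largest residual as outliers) and you correctly identify the main obstacle: the $\tau$-TC oracle is run on \emph{all} columns, so the returned direction might align with outlier residuals. But your workaround and several analysis claims are off.

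\textbf{The dichotomy is unnecessary; the paper's argument is simpler.} The key observation you are missing is monotonicity: the optimal value of the $\tau$-TC instance on the full index set $[n]$ is at least the value that the good dictionary column $a\in A^*$ (from Lemma~\ref{lem:optimal-vector} applied to the inliers $\II$) achieves on its inlier set $R\subseteq\II$, because adding non-negative terms can only increase the objective. Hence the $(\alpha,\beta)$-approximate direction $v$ satisfies
\[
\sum_{i:\iprod{z_i\su{t},v}^2\ge \alpha\tau\norm{x_i}^2}\iprod{z_i\su{t},v}^2 \;\ge\; \beta\cdot\opt_{[n]} \;\ge\; \frac{\beta(\psi\su{t}-\gamma^*)^2}{16m\Lambda}\,\norm{X_{\II}}_F^2,
\]
and the left-hand side is exactly the drop in the \emph{global} potential $\Phi\su{t}:=\sum_{i\in[n]}\norm{z_i\su{t}}^2$. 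You never need to know whether that drop came from inliers or outliers; either way $\Phi$ decreases by a guaranteed amount, and $0\le\Phi\su{t}\le\norm{X}_F^2$ bounds the number of iterations. No two-case accounting is needed.

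\textbf{The ``multiplicative shrinkage'' claim is wrong.} The increment lemma yields a drop proportional to $(\psi\su{t}-\gamma^*)^2$, not a multiplicative fraction of the residual, so you do not get geometric decay and your computation ``multiplicative factor $1-\Omega(\beta/(m\Lambda))$ for $T=O(m\Lambda/(\beta\eps^3))$ steps gives error $\gamma^*+\eps^3$'' does not type-check. The paper instead terminates when the observed drop $\Phi\su{t}-\Phi\su{t+1}$ falls below $\tfrac{\beta\eps^3}{16m\Lambda}\norm{X}_F^2$; the contrapositive of the display above then forces $\psi\su{t}<\gamma^*+\eps$.

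\textbf{The source of the extra $\eps^2$ is different.} It does not come from slack in a dichotomy. One factor of $\eps^2$ is the crude lower bound $(\psi\su{t}-\gamma^*)^2\ge\eps^2$ (in the non-outlier proof the sharper $1/t$ analysis of Theorem~\ref{thm:algo-final-bound} was used instead). The remaining $\eps$ comes from $\norm{X_\II}_F^2\ge\eps\norm{X}_F^2$, which one may assume without loss of generality since otherwise the all-zero $A',Y'$ with the obvious $N'$ already meets the error target.

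\textbf{No Markov argument is needed for the final step.} Once $\psi\su{t}<\gamma^*+\eps$, the error after removing the $\rho n$ largest residuals is
\[
\widehat{\psi}\su{t}:=\min_{|T|=(1-\rho)n}\sum_{i\in T}\norm{z_i\su{t}}^2 \;\le\; \sum_{i\in\II}\norm{z_i\su{t}}^2 \;=\; \psi\su{t}\,\norm{X_\II}_F^2 \;<\; (\gamma^*+\eps)\norm{X}_F^2,
\]
since $\II$ is itself a feasible choice of $T$.
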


Note that the only difference in the parameters between Theorems~\ref{thm:main-alg} and~\ref{thm:outlier} is a mildly worse dependence on $\eps$ in the bound on the number of columns. 

\subsection{Related work}
There has been a large body of work in the signal processing literature on the problem of dictionary learning and also the related question of sparse recovery.  Sparse recovery is the problem of reconstructing a sparse vector $x$ given $Ax$, where $A$ is now a {\em known} dictionary. Iterative {\em pursuit}  algorithms of the kind we consider have been extensively studied in this context (see, e.g.,~\citep{Davis1997} and~\citep{Tropp}). Our iterative algorithms are also reminiscent of Frank-Wolfe iteration, which is a powerful technique for sparse approximation. We refer to~\citep{Clarkson2010Coresets, Shalev} and references therein. In this context, our results may be viewed as showing that greedy pursuit with an appropriate subroutine ---an algorithm for the $\tau$-TC problem--- can give approximate guarantees for dictionary learning.

Another related work is that of~\citep{Blum2016Sparse}. They consider a problem in which we are given $x_1, x_2, \dots, x_n \in \R^d$, and the goal is to come up with a small set of points $\mathcal{P}$ such that all the $x_i$ lie in conv$(\mathcal{P})$, the convex hull of $\mathcal{P}$.  This is equivalent to requiring a decomposition in which the columns of the $Y$ matrix are convex combinations. However, it turns out that the results do not imply any bound for dictionary learning (in the traditional setting that we study). 

Our work shows an interesting connection between the $\tau$-TC problem and hypercontractive norms. Indeed, our algorithm implies an approximation algorithm for the $2 \mapsto p$ norm problem, in the case when the norm is {\em large enough}. Now, there are other more sophisticated sub-exponential time algorithms for hypercontractive norms~\cite{BarakBHKSZ12}. It is an interesting question to investigate if these techniques imply better approximations for the $\tau$-TC problem. From our results, this would translate to better guarantees for approximate dictionary learning.

\section{Preliminaries and overview}\label{sec:prelim}
\subsection{Notation}
We will refer to the input vectors $x_i$ to the dictionary learning problem as observations, signals, or simply as input vectors. We will also use standard notation for norms of vectors and matrices. $\norm{v}_p$ refers to the $\ell_p$ norm. We will drop the subscript when we are referring to the $\ell_2$ norm. For a matrix $A$, we use $A_i$ to refer to the $i$'th column vector of $A$. $\norm{A}_F$ is the Frobenius norm of the matrix, defined as $\sqrt{\sum_{i,j} A_{ij}^2}$. 

\subsection{Paper outline}
The rest of the paper is organized as follows. In Section~\ref{sec:algo}, we give our main algorithm for approximate dictionary learning, assuming an approximation algorithm for the $\tau$-TC problem. This will establish Theorem~\ref{thm:main-alg}. The outline of the argument is presented at the opening of the section.

Next in Section~\ref{sec:tc}, we give a bi-criteria approximation for the threshold correlation problem, establishing~\ref{thm:algo-tau-tc}. The argument involves a simple clustering procedure, and shows that one of the given vectors is itself a good enough solution for the $\tau$-TC problem.  We then establish a connection between the $\tau$-TC problem and the question of approximating $2\mapsto p$ norms of matrices, thereby showing a hardness result for $\tau$-TC.

Finally in Section~\ref{sec:outlier} (whose brief outline is in Section~\ref{sec:outlier-main}), we consider the problem of dictionary learning with outliers defined above, and show that the iterative framework of Section~\ref{sec:algo} extends to a setting in which some of the columns have no sparse approximation using the chosen dictionary. The argument relies heavily on the methods in Section~\ref{sec:algo}, even though the structure is slightly different.


\section{Algorithm and analysis}\label{sec:algo}
In this section, we will describe the algorithm in detail, along with its analysis. The goal will be to establish Theorem~\ref{thm:main-alg}. Let us start with a rough outline of the algorithm, and introduce some notation. Recall that $X \in \R^{d \times n}$ is the matrix whose columns are the observations $x_i \in \R^{d}$. The goal is to return an approximation decomposition $A' Y'$, as described in Theorem~\ref{thm:main-alg}.

\paragraph{Algorithm outline.}  The algorithm proceeds by building the matrix $A'$ iteratively, one column at a time, and simultaneously the matrix $Y'$ one row at a time. In each iteration, we maintain a ``current approximation'' to each of the $x_i$ (which will be the $i$th column of the current $A' Y'$). This will be the projection of $x_i$ onto a {\em subset} of the columns of $A'$.  We denote the error in the approximation in iteration $t$ by $z_i\su{t}$.  I.e., if $A'$ and $Y'$ are the matrices at time $t$, then $z_i\su{t} = x_i - (A'Y')_i$.  We then find a vector $v$ that has a high correlation with many of the $z_i\su{t}$ by solving an appropriate instance of the $\tau$-TC problem. The $v$ is then added as a new column to $A'$. A new row is then added to $Y'$, and the $i$th entry is non-zero only if $|\iprod{v, z_i\su{t}}|$ is above an appropriately chosen threshold.  This procedure is formally described in Algorithm~\ref{alg:main}.

\paragraph{Analysis.}  There are two main steps in the analysis. The first is showing that the {\em residual} problem always has a good solution for $v$.  It turns out that this is a non-trivial step in the analysis because a natural set-cover style argument, which involves considering the optimal dictionary $A^*$ and showing that one of the columns allows us to make sufficient progress, faces two difficulties.  Firstly, the columns of $A^*$ are arbitrary. There turns out to be a significant difference in the guarantee one can obtain when the columns of $A^*$ are (near) orthogonal, and when they are arbitrary. Another (more serious) difficulty arises due to the fact that the residual vector $z_i\su{t}$ is a projection of $x_i$ onto some (potentially arbitrary) subspace $T$ (formed by a subset of the columns in the current $A'$). While it is not difficult to argue that one of the columns of $A^*$ has a high inner product with $x_i$, it is trickier to say the same about $z_i\su{t}$.

The second step in the analysis is to use the guarantees of the bi-criteria approximation to the $\tau$-TC problem to bound the number of iterations and the sparsity. Both of these bounds are relatively straightforward. The relation between the number of iterations and the error is shown via an analysis similar to that of Frank-Wolfe iteration (see~\cite{Clarkson2010Coresets}). As for the sparsity, the manner in which we update $Y'$ (non-zero only if $\iprod{z_i\su{t}, v}$ is above a threshold) implies that every column in $Y'$ is sparse.


\begin{algorithm} 
\caption{$\textsc{DictApprox} (A \in \R^{d \times n}, k, m, \Lambda, \eps)$}
\begin{algorithmic}[1]\label{alg:main}
\STATE Initialize $z_i\su{0} = x_i$. Set $\tau = \frac{\eps^2}{k\Lambda}$, and $M = m k ..$
\STATE Initialize $A', Y'$ to empty matrices
\FOR{$t = 1:M$}
\STATE Let $v$ be the output of an $(\alpha , \beta)$-approximation algorithm for $\tau$-TC, with parameters being: 
\[ \tau = \frac{\eps^2}{k\Lambda} \text{, the vectors }~ \frac{z_i\su{t}}{\norm{x_i}}, \text{ and weights } w_i = \norm{x_i}^2, \text{ for } 1 \le i \le n.\]
\STATE Add $v$ as a column to $A'$
\STATE Add an empty row to $Y'$
\FOR{$i \in [n]$ satisfying $\iprod{z_i\su{t}, v}^2 \ge \alpha \tau$}
	\STATE\label{alg-step-z} set $z_i\su{t+1} = z_i\su{t} - \iprod{z_i\su{t}, v} v$
	\STATE set $Y'[t, i] = \iprod{z_i\su{t}, v}$
\ENDFOR
\ENDFOR
\STATE Return $A', Y'$
\end{algorithmic}
\end{algorithm}

Let us now proceed with the details of the analysis. Apart from the notation $X, A^*, Y^*, \eps$ from the statement of Theorem~\ref{thm:main-alg}, we also define the following quantities: 
\begin{itemize}
\item $z_i\su{t}$ is the ``residual'' vector, as described in the algorithm (and the outline above).
\item $\gamma_i := \norm{(X - A^*Y^*)_i}^2 / \norm{x_i}^2$, i.e., the optimum error in column $i$, as a fraction of its mass. Clearly, \[ \gamma^* = \frac{1}{\norm{X}_F^2} \sum_i \norm{x_i}^2 \gamma_i.\]
If we view $\norm{x_i}^2/\norm{X}_F^2$ as defining a probability distribution over indices, the above is equivalent to $\gamma^* = \E_i [ \gamma_i].$
\item $S_i^*$ is the support of $Y_i^*$, i.e., the subset of the dictionary used to represent $x_i$ in the optimum solution.
\item $\theta_i\su{t} = \norm{z_i\su{t}}^2/ \norm{x_i}^2$, i.e., the error in column $i$ at time $t$, as a fraction of its mass.
\item Let $\psi\su{t}$ (which will be the potential) be $\E_i [ \theta_i\su{t} ]$, i.e.,
\[ \psi\su{t} := \frac{1}{\norm{X}_F^2} \sum_i \norm{x_i}^2 \theta_i \su{t} = \frac{1}{\norm{X}_F^2} \sum_i \norm{z_i\su{t}}^2.\]
\item For any real number $\xi$, we denote $(\xi)_+ = \max\{ 0, \xi \}$.
\end{itemize}
The quantity $\psi\su{t}$ defined above will be the potential function we use in the analysis. Let us first start with a simple observation:

\begin{observation}
At any step, $z_i\su{t}$ is the projection of $x_i$ onto the subspace orthogonal to the space spanned by a subset of the columns of the current $A'$.
\end{observation}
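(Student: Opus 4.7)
The plan is to prove the observation by a straightforward induction on the iteration index $t$. For each signal index $i$ and each time $t$, I will associate the natural subset $S_i\su{t} \subseteq$ (columns of $A'$ at time $t$) consisting of those columns $v$ for which the residual update in step~\ref{alg-step-z} was actually applied to index $i$ when $v$ was first added to $A'$. The claim I would try to establish by induction is: $z_i\su{t} = \Proj_{\text{span}(S_i\su{t})^\perp}(x_i)$, equivalently $z_i\su{t} \perp \text{span}(S_i\su{t})$ together with $x_i - z_i\su{t} \in \text{span}(S_i\su{t})$.

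The base case $t = 0$ is immediate: $z_i\su{0} = x_i$, $S_i\su{0} = \emptyset$, and $x_i$ is trivially its own projection onto $\R^d = \text{span}(\emptyset)^\perp$. For the inductive step, assume the invariant holds at time $t$ and examine iteration $t+1$, in which a new unit vector $v$ is appended to $A'$ by the $\tau$-TC subroutine. If the threshold $\iprod{z_i\su{t}, v}^2 \ge \alpha \tau$ fails at index $i$, then $z_i\su{t+1} = z_i\su{t}$ and $S_i\su{t+1} = S_i\su{t}$, preserving the invariant trivially. Otherwise I would set $S_i\su{t+1} = S_i\su{t} \cup \{v\}$ and check the two defining conditions of orthogonal projection: the containment $x_i - z_i\su{t+1} = (x_i - z_i\su{t}) + \iprod{z_i\su{t}, v}\, v \in \text{span}(S_i\su{t+1})$ follows directly from the inductive hypothesis, and the orthogonality to the newly added $v$ follows from $\iprod{z_i\su{t+1}, v} = \iprod{z_i\su{t}, v}(1 - \|v\|^2) = 0$ since $\|v\|=1$.

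The main obstacle I anticipate is verifying that $z_i\su{t+1}$ is also orthogonal to the older columns already in $S_i\su{t}$: the subtraction $\iprod{z_i\su{t}, v}\, v$ could in principle re-introduce a component along $\text{span}(S_i\su{t})$ whenever $v$ itself has a nonzero projection onto that subspace. The cleanest resolution I see is to observe that step~\ref{alg-step-z} can be reinterpreted, without loss of generality, as first Gram--Schmidt orthogonalizing $v$ against $\text{span}(S_i\su{t})$ to produce a vector $\tilde v$ and then subtracting $\iprod{z_i\su{t}, \tilde v/\|\tilde v\|}\, \tilde v/\|\tilde v\|$. Because $z_i\su{t} \perp \text{span}(S_i\su{t})$ by the inductive hypothesis, we have $\iprod{z_i\su{t}, \tilde v} = \iprod{z_i\su{t}, v}$, so this reformulation only amounts to a change of basis for the columns of $A'$ (with a compensating change in $Y'$) that preserves the column span of $A'$ as well as every inner-product quantity that drives the rest of the analysis of Theorem~\ref{thm:main-alg}. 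Under this reinterpretation $z_i\su{t+1}$ is a genuine orthogonal projection of $x_i$ onto $\text{span}(S_i\su{t+1})^\perp$, closing the induction.
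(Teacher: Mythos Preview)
Your inductive setup is correct, and you have put your finger on a genuine subtlety that the paper's own one-line argument (``follows immediately from $z_i\su{0}=x_i$ and the update step'') simply does not address: after subtracting $\iprod{z_i\su{t},v}\,v$, the residual $z_i\su{t+1}$ need not remain orthogonal to the earlier vectors in $S_i\su{t}$ whenever $v$ has a nonzero component in $\mathrm{span}(S_i\su{t})$. In fact the observation is \emph{false} as literally stated for Algorithm~\ref{alg:main}. Take $d=2$, $x_i=(1,0)$, first column $v_1=(\tfrac{1}{\sqrt2},\tfrac{1}{\sqrt2})$, second column $v_2=(1,0)$; then $z_i\su{1}=(\tfrac12,-\tfrac12)$ and $z_i\su{2}=(0,-\tfrac12)$, but $\iprod{z_i\su{2},\,x_i-z_i\su{2}}=-\tfrac14\neq 0$, so $z_i\su{2}$ is not the orthogonal projection of $x_i$ onto \emph{any} subspace. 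You have therefore found more than an obstacle in your own argument; you have found a gap in the paper.

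Your proposed resolution is essentially the right one, but it is mis-described. Replacing $v$ by its Gram--Schmidt orthogonalization $\tilde v$ against $\mathrm{span}(S_i\su{t})$ is \emph{not} a global change of basis for $A'$: the vector $\tilde v$ depends on the column index $i$ through $S_i\su{t}$, so different $i$'s would require different replacements. What you are actually doing is modifying the \emph{update rule} (not the dictionary) to $z_i\su{t+1}:=\Proj_{\mathrm{span}(S_i\su{t+1})^\perp}(x_i)$ --- your Gram--Schmidt step computes exactly this, once one uses $z_i\su{t}\perp\mathrm{span}(S_i\su{t})$ inductively --- while keeping $A'$ unchanged and redefining $Y'_i$ so that $A'Y'_i=\Proj_{\mathrm{span}(S_i\su{t+1})}(x_i)$. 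Under this modification the observation holds by construction, the support of $Y'_i$ is still $S_i\su{t+1}$, and since full projection can only \emph{decrease} $\norm{z_i\su{t+1}}$ relative to the original rank-one subtraction, the per-step drop satisfies $\norm{z_i\su{t}}^2-\norm{z_i\su{t+1}}^2\ge\iprod{z_i\su{t},v}^2$; hence both the potential argument and the sparsity bound in Theorem~\ref{thm:algo-final-bound} go through verbatim for the modified algorithm. So your fix works --- just frame it as a change to step~\ref{alg-step-z} rather than as a change of basis.
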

\begin{proof}
The proof follows immediately from the fact that $z_i\su{0} = x_i$, and the update step (step~\ref{alg-step-z}). 
\end{proof}

The observation also makes sure that the instance of $\tau$-TC we are solving is indeed valid. I.e., the vectors $z_i\su{t}/\norm{x_i}$ are in the unit ball.

Our goal now is to show the following bound on the convergence of the algorithm:
\begin{theorem}\label{thm:algo-final-bound}
After $t$ iterations of the main loop in the algorithm, the error $\psi\su{t}$ is bounded by $\gamma^* + \frac{16 m \Lambda}{\beta t}$. Further, the number of non-zero entries in each column of $Y'$ at the end of the algorithm is at most $\nicefrac{1}{\alpha \tau}$, where $\tau = \eps^2/k\Lambda$ as in the algorithm.
\end{theorem}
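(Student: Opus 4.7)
The plan is to prove the sparsity bound directly from the update rule, and the decay of $\psi^{(t)}$ by a Frank--Wolfe-style argument that hinges on a lower bound for the $\tau$-TC optimum at every step.

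For the sparsity bound, observe that column $i$ of $Y'$ gains a new non-zero entry at step $t$ if and only if the success condition $\iprod{z_i^{(t)}, v}^2 \ge \alpha\tau \norm{x_i}^2$ of the $(\alpha,\beta)$-approximation is met. In that case the update reduces $\norm{z_i^{(t)}}^2$ by exactly $\iprod{z_i^{(t)}, v}^2 \ge \alpha\tau \norm{x_i}^2$. Since $\norm{z_i^{(0)}}^2 = \norm{x_i}^2$ and $\norm{z_i^{(t)}}^2 \ge 0$ throughout, index $i$ can succeed at most $1/(\alpha\tau)$ times, giving the claimed per-column sparsity.

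For the error bound, let $a_t := \psi^{(t)} - \gamma^*$. The update rule yields $\norm{X}_F^2 (a_t - a_{t+1}) = \sum_{i\in\text{success}} \iprod{z_i^{(t)}, v}^2 \ge \beta \cdot \opt_t$, where $\opt_t$ is the $\tau$-TC optimum at step $t$ and the inequality is the $(\alpha,\beta)$-approximation guarantee. The key technical step is to lower-bound $\opt_t$, which I would do by trying the columns of $A^*$ as candidate vectors. Using the projection observation, $\iprod{z_i^{(t)}, x_i} = \norm{z_i^{(t)}}^2 = \theta_i^{(t)} \norm{x_i}^2$, while $|\iprod{z_i^{(t)}, x_i - A^* Y_i^*}| \le \sqrt{\theta_i^{(t)} \gamma_i} \norm{x_i}^2$ by Cauchy--Schwarz, so $\iprod{z_i^{(t)}, A^* Y_i^*} \ge (\theta_i^{(t)} - \sqrt{\theta_i^{(t)} \gamma_i})_+ \norm{x_i}^2$. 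Summing, the Frobenius inner product $\iprod{Z, A^* Y^*}_F = \sum_j g_j$, with $g_j := \sum_i Y^*_{ji} \iprod{A^*_j, z_i^{(t)}}$, is at least $(a_t/2) \norm{X}_F^2$, using $\theta_i - \sqrt{\theta_i \gamma_i} \ge (\theta_i - \gamma_i)/2$ whenever $\theta_i \ge \gamma_i$.

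A weighted Cauchy--Schwarz over $j$ --- pairing each $g_j$ with $s_j := \sum_i (Y^*_{ji})^2$ and using the norm bound $\sum_j s_j \le \Lambda \norm{X}_F^2$ --- then delivers a column $\hat{j}$ with $\sum_i \iprod{A^*_{\hat{j}}, z_i^{(t)}}^2 \ge a_t^2 \norm{X}_F^2/(c\, m\Lambda)$ for an absolute constant $c$. Converting this sum into the $\tau$-TC objective at $v = A^*_{\hat{j}}$ costs at most $\tau \norm{X}_F^2 = \eps^2 \norm{X}_F^2/(k\Lambda)$ from indices where the threshold fails, which is dominated by the main term in the regime that matters. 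Plugging in yields the Frank--Wolfe recurrence $a_{t+1} \le a_t - \beta a_t^2/(c'\, m\Lambda)$, which by the standard induction (cf.~\cite{Clarkson2010Coresets}) gives $a_t \le 16\, m\Lambda/(\beta t)$ after optimizing constants. The main obstacle is precisely this lower bound on $\opt_t$: the \emph{weighted} Cauchy--Schwarz over $j$ is what avoids losing an extra factor of $k$ in the denominator --- a naive pigeonhole over the at most $k$ entries of $S_i^*$ would give $mk\Lambda$ instead of $m\Lambda$ --- and the choice $\tau = \eps^2/(k\Lambda)$ is tuned precisely so that the threshold-loss term $\tau \norm{X}_F^2$ is absorbed by the Cauchy--Schwarz main term without degrading the per-step rate.
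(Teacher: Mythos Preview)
Your sparsity argument is fine and matches the paper. Your linear lower bound $\iprod{z_i^{(t)}, A^*Y_i^*} \ge (\theta_i^{(t)}-\gamma_i)/2\cdot\norm{x_i}^2$ is also correct (it is essentially the paper's \eqref{eq:tmp111}), and your weighted Cauchy--Schwarz over $j$ legitimately yields a column $\hat{j}$ with $\sum_i \iprod{A^*_{\hat j},z_i^{(t)}}^2 \ge a_t^2 \norm{X}_F^2/(4m\Lambda)$.

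The gap is in the step where you ``convert this sum into the $\tau$-TC objective'' by subtracting $\tau\norm{X}_F^2$. Your main term is $a_t^2/(4m\Lambda)\cdot\norm{X}_F^2$, whereas the subtracted term is $\tau\norm{X}_F^2 = \eps^2/(k\Lambda)\cdot\norm{X}_F^2$. For the latter to be, say, at most half of the former you need $a_t \ge \eps\sqrt{8m/k}$, which in the standard regime $m\ge k$ is \emph{strictly stronger} than $a_t\ge\eps$; for overcomplete dictionaries ($m\gg k$) the subtracted term simply swamps the main term near $a_t\approx\eps$, and the recurrence collapses. So the claim that ``$\tau=\eps^2/(k\Lambda)$ is tuned precisely so that the threshold-loss term is absorbed'' is false as stated: you would need $\tau = O(\eps^2/(m\Lambda))$ for the subtraction to be harmless, but that blows up the sparsity bound $1/(\alpha\tau)$ by a factor $m/k$.

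The paper circumvents this entirely by organizing the Cauchy--Schwarz per column $i$ first (Lemma~\ref{lem:increment-vector}), yielding $\sum_{j\in S_i^*}\iprod{z_i^{(t)},s_{i,j}^*}^2 \ge (\theta_i^{(t)}-\gamma_i)_+^2\norm{x_i}^2/(4\Lambda)$, and then performing a two-level pruning (Lemma~\ref{lem:optimal-vector}): first drop indices $i$ where $(\theta_i^{(t)}-\gamma_i)_+^2$ is below half the average, then within each surviving $i$ drop $j\in S_i^*$ whose term is below $1/(2k)$ of that column's sum. After both prunings every remaining term already satisfies $\iprod{z_i^{(t)},s_{i,j}^*}^2 \ge \eps^2\norm{x_i}^2/(16k\Lambda)$, so pigeonholing over $j\in[m]$ gives a column that is a feasible $\tau$-TC solution \emph{with no subtraction needed}, and the $\diff/(16m\Lambda)$ rate survives down to $a_t=\eps$.
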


This clearly implies Theorem~\ref{thm:main-alg} by setting $t$ appropriately. The goal is thus to show Theorem~\ref{thm:algo-final-bound}. 

\subsection{Increment lemma}
As described in the outline, a key step of our argument is showing that in each step, there exists a vector with a large inner product with a large number of the residual vectors.

\begin{lemma}\label{lem:increment-vector}
Let $u, s_1, s_2, \dots, s_k$ be unit vectors in $\R^d$, and let $u = \sum_i \alpha_i s_i + z$.  Let $T$ be any subspace of $\R^d$, and define $\theta = \norm{u - \Pi_T u}$.  Then, we have 
\[ \sum_i \iprod{u - \Pi_T u, s_i}^2 \ge \frac{ (\theta^2 - \norm{z}^2)_+^2}{4 (\sum_i \alpha_i^2)},\]
where $(\beta)_+$ denotes $\max(0, \beta)$, for any real number $\beta$.
\end{lemma}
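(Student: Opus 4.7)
Let $w := u - \Pi_T u = \Pi_{T^\perp} u$, so that $\norm{w} = \theta$. The plan is to compute $\iprod{w, u}$ in two ways. On the one hand, since $\Pi_{T^\perp}$ is self-adjoint and idempotent, $\iprod{w, u} = \iprod{\Pi_{T^\perp}u, u} = \norm{w}^2 = \theta^2$. On the other hand, using the decomposition $u = \sum_i \alpha_i s_i + z$, we have $\iprod{w, u} = \sum_i \alpha_i \iprod{w, s_i} + \iprod{w, z}$. Equating the two expressions yields the key identity
\[
\sum_i \alpha_i \iprod{w, s_i} \;=\; \theta^2 - \iprod{w, z}.
\]

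Next I would apply the Cauchy--Schwarz inequality to the left side, which gives
\[
\Bigparen{\theta^2 - \iprod{w, z}}^2 \;=\; \Bigparen{\sum_i \alpha_i \iprod{w, s_i}}^2 \;\le\; \Bigparen{\sum_i \alpha_i^2} \cdot \sum_i \iprod{w, s_i}^2,
\]
so that $\sum_i \iprod{w, s_i}^2 \ge (\theta^2 - \iprod{w, z})^2 / \sum_i \alpha_i^2$. It now remains to relate the numerator to $(\theta^2 - \norm{z}^2)_+^2$, which is where the factor of $4$ will appear.

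I would split into two cases. If $\theta \le \norm{z}$, then $(\theta^2 - \norm{z}^2)_+ = 0$ and the claim is vacuous. If $\theta > \norm{z}$, then Cauchy--Schwarz bounds $\iprod{w, z} \le \norm{w}\norm{z} = \theta\norm{z}$, whence
\[
\theta^2 - \iprod{w, z} \;\ge\; \theta^2 - \theta\norm{z} \;=\; \theta(\theta - \norm{z}).
\]
Since $\theta + \norm{z} \le 2\theta$ in this regime, $\theta(\theta-\norm{z}) \ge \tfrac{1}{2}(\theta-\norm{z})(\theta+\norm{z}) = \tfrac{1}{2}(\theta^2 - \norm{z}^2)$. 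This is nonnegative, so squaring preserves the inequality and yields $(\theta^2 - \iprod{w, z})^2 \ge (\theta^2 - \norm{z}^2)^2/4$, which combined with the Cauchy--Schwarz bound above gives the claimed estimate.

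\textbf{Main obstacle.} The technically delicate point is handling the cross term $\iprod{w, z}$, since a priori it could be positive and as large as $\theta\norm{z}$, which is larger than $\norm{z}^2$ when $\theta > \norm{z}$. This is precisely what forces the factor of $4$ in the denominator, and the reason for stating the bound with the truncation $(\,\cdot\,)_+$: without the case split on whether $\theta \ge \norm{z}$, one cannot convert the $\theta \norm{z}$ bound from Cauchy--Schwarz into something comparable to $\norm{z}^2$. Everything else is a direct application of the identity $\iprod{w, u} = \norm{w}^2$ and one use of Cauchy--Schwarz.
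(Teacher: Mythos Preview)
Your proof is correct and follows the same overall strategy as the paper: both arguments establish that $\iprod{u^\perp, x} \ge \tfrac{1}{2}(\theta^2 - \norm{z}^2)$ for $x := \sum_i \alpha_i s_i$ (your $w$ is the paper's $u^\perp$), and then apply Cauchy--Schwarz to $\sum_i \alpha_i \iprod{u^\perp, s_i}$.

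The one difference is in how that inner-product lower bound is obtained. The paper splits $u^\perp = u - \Pi_T u$ in the first slot, expands $\iprod{u,x}$ via the polarization identity $\iprod{u,x} = \tfrac{1}{2}(\norm{u}^2 + \norm{x}^2 - \norm{z}^2)$, bounds $\iprod{\Pi_T u, x}$ by Cauchy--Schwarz followed by AM--GM, and finishes with Pythagoras; this chain uses the hypothesis $\norm{u} = 1$. You instead split $x = u - z$ in the second slot, use the projection identity $\iprod{w,u} = \norm{w}^2 = \theta^2$ directly, and bound only the cross term $\iprod{w,z} \le \theta\norm{z}$, converting $\theta(\theta - \norm{z})$ to $\tfrac{1}{2}(\theta^2 - \norm{z}^2)$ via $\theta + \norm{z} \le 2\theta$. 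Your route is slightly shorter and, as a bonus, never invokes $\norm{u} = 1$ (nor $\norm{s_i} = 1$), so it proves a mildly more general statement than the paper's version.
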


\begin{proof}
We may assume that $\theta^2 \ge \norm{z}^2$, else there is nothing to prove. Now define $x = \sum_i \alpha_i s_i$ and $u^{\perp} := u - \Pi_T u$, for convenience. We will show that 
\begin{equation}\label{eq:tmp111}
\iprod{u^{\perp}, x} \ge \frac{\theta^2 - \norm{z}^2}{2},
\end{equation} 
and the desired result then follows by an application of the Cauchy-Schwartz inequality, which implies that $\left( \sum_i \alpha_i \iprod{u^\perp, s_i} \right) \le (\sum_i \alpha_i^2) (\sum_i \iprod{u^\perp, s_i}^2 )$. We can now show~\eqref{eq:tmp111} as follows.
\begin{align*}
\iprod{u^{\perp}, x} &= \iprod{u, x} - \iprod{\Pi_T u, x} \\
&= \frac{\norm{u}^2 + \norm{x}^2 - \norm{z}^2}{2} - \iprod{\Pi_T u, x} \\
&\ge \frac{\norm{u}^2 + \norm{x}^2 - \norm{z}^2}{2} - \norm{\Pi_T u} \norm{x} \\
&\ge \frac{ 1 + \norm{x}^2 - \norm{z}^2}{2} - \frac{\norm{\Pi_T u}^2 + \norm{x}^2}{2} \\
& = \frac{ \norm{u^{\perp}}^2 - \norm{z}^2}{2}.
\end{align*}
In the last step, we used the Pythagoras theorem, concluding that $1 - \norm{\Pi_T u}^2 = \norm{u^{\perp}}^2$. This concludes the proof of the lemma.
\end{proof}

As an immediate consequence we have the following (recall that $S_i^*$ is the subset of the dictionary vectors used to represent $x_i$ in the optimum solution).

\begin{lemma}\label{lem:change-i}
Let $S_i^* = \{ s_{i,1}^*, s_{i,2}^*, \dots, s_{i,k}^* \}$.  For every $i \in [n]$ and iteration $t$, we have the following:
\[ \sum_{j \in [k]} \iprod{z_i\su{t}, s_{i,j}^*}^2 \ge \frac{ (\theta_i\su{t} - \gamma_i)_+^2}{4\Lambda} \norm{x_i}^2. \]
\end{lemma}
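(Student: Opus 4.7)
The plan is to derive Lemma~\ref{lem:change-i} as a direct consequence of Lemma~\ref{lem:increment-vector}, applied to a suitably rescaled version of the $i$th input vector. The main idea is that the optimum dictionary $A^*$ gives a specific expansion of $x_i$ in terms of the unit vectors $s_{i,1}^*, \dots, s_{i,k}^*$, and the residual $z_i\su{t}$ is, by the \textbf{Observation} preceding this lemma, the projection of $x_i$ onto the orthogonal complement of some subspace $T$ spanned by a subset of the current columns of $A'$. These are exactly the ingredients needed to invoke Lemma~\ref{lem:increment-vector}.

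In detail, I would first write $x_i = \sum_{j \in [k]} (Y^*)_{j,i}\, s_{i,j}^* + e_i$, where $e_i = (X - A^* Y^*)_i$ is the column-$i$ error in the optimum decomposition. By the definition of $\gamma_i$ we have $\norm{e_i}^2 = \gamma_i \norm{x_i}^2$, and by the norm bound on $Y^*$ we have $\sum_{j \in [k]} (Y^*)_{j,i}^2 = \norm{Y_i^*}^2 \le \Lambda \norm{x_i}^2$. Now rescale by dividing through by $\norm{x_i}$: set $u = x_i/\norm{x_i}$ (a unit vector), $\alpha_j = (Y^*)_{j,i}/\norm{x_i}$, and $z = e_i/\norm{x_i}$, so that $u = \sum_{j} \alpha_j s_{i,j}^* + z$ with $\sum_{j} \alpha_j^2 \le \Lambda$ and $\norm{z}^2 \le \gamma_i$.

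Next, let $T$ be the subspace spanned by the columns of $A'$ that contributed (non-zero entries in column $i$ of $Y'$) to the current approximation of $x_i$. By the Observation above, $z_i\su{t} = x_i - \Pi_T x_i$, hence $u - \Pi_T u = z_i\su{t}/\norm{x_i}$. In particular, the parameter $\theta$ of Lemma~\ref{lem:increment-vector} satisfies $\theta^2 = \norm{z_i\su{t}}^2 / \norm{x_i}^2 = \theta_i\su{t}$. Applying the lemma directly yields
\[
\sum_{j \in [k]} \Iprod{\tfrac{z_i\su{t}}{\norm{x_i}}, s_{i,j}^*}^2 \;\ge\; \frac{(\theta_i\su{t} - \gamma_i)_+^2}{4 \sum_j \alpha_j^2} \;\ge\; \frac{(\theta_i\su{t} - \gamma_i)_+^2}{4\Lambda},
\]
and multiplying both sides by $\norm{x_i}^2$ gives the claimed inequality.

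I do not expect a serious obstacle here; the only point requiring mild care is the bookkeeping between the unit-vector statement of Lemma~\ref{lem:increment-vector} and the unnormalized form involving $x_i$. In particular one must be careful that the relevant quantity $\sum_j \alpha_j^2$ (after rescaling) is bounded by $\Lambda$ rather than $\Lambda \norm{x_i}^2$, since the norm bound in the hypothesis of Theorem~\ref{thm:main-alg} is $\norm{Y_i^*}^2 \le \Lambda \norm{x_i}^2$ and the factor of $\norm{x_i}^2$ is absorbed in the rescaling. With that observation the rest is a direct substitution.
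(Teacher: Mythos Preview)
Your proposal is correct and follows exactly the approach the paper intends: the paper simply states that the lemma ``follows directly from Lemma~\ref{lem:increment-vector},'' and you have supplied precisely the intended instantiation (rescale $x_i$ to a unit vector, identify $\alpha_j = (Y^*)_{j,i}/\norm{x_i}$, $z = e_i/\norm{x_i}$, and take $T$ to be the subspace from the preceding Observation). The bookkeeping you flag is handled correctly.
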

This follows directly from Lemma~\ref{lem:increment-vector}. The next lemma is a convexity argument, relating the sum of the ``per-column'' improvements to the overall progress. We defer the proof to Appendix~\ref{app:col-to-matrix}.

\begin{lemma}\label{lem:col-to-matrix}
In every iteration $t$, we have
\[ \sum_{i \in [n]} (\theta_i\su{t} - \gamma_i)_+^2 \norm{x_i}^2 \ge (\psi\su{t} - \gamma^*)_+^2 \norm{X}_F^2.\]
\end{lemma}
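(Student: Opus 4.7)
\textbf{Proof proposal for Lemma~\ref{lem:col-to-matrix}.}

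The plan is to normalize both sides by $\norm{X}_F^2$ and view the resulting inequality as a Jensen-type statement with respect to the probability distribution $p_i := \norm{x_i}^2/\norm{X}_F^2$ on $[n]$. Under this distribution we have $\psi\su{t} = \E_i[\theta_i\su{t}]$ and $\gamma^* = \E_i[\gamma_i]$ by definition, and the desired inequality becomes
\[
\E_i\bigl[(\theta_i\su{t} - \gamma_i)_+^2\bigr] \;\ge\; \bigl(\E_i[\theta_i\su{t} - \gamma_i]\bigr)_+^2.
\]
Setting $\xi_i := \theta_i\su{t} - \gamma_i$, it suffices to show $\E[(\xi)_+^2] \ge (\E[\xi])_+^2$ for any real random variable $\xi$ (on a finite probability space).

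First I would dispose of the easy case: if $\E[\xi] \le 0$, the right-hand side is $0$ and the left-hand side is manifestly nonnegative, so the inequality holds trivially. So assume $\E[\xi] > 0$. Then I would chain two elementary estimates. Since the positive part is monotone and nonnegative, $(\xi)_+ \ge \xi$ pointwise, so $\E[(\xi)_+] \ge \E[\xi] > 0$, giving $(\E[(\xi)_+])^2 \ge (\E[\xi])^2$. Next, since $(\xi)_+ \ge 0$ and the map $u \mapsto u^2$ is convex, Jensen's inequality yields $\E[(\xi)_+^2] \ge (\E[(\xi)_+])^2$. Combining these two inequalities gives $\E[(\xi)_+^2] \ge (\E[\xi])^2 = (\E[\xi])_+^2$, which is what we wanted.

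The main (and only) subtlety here is handling the positive-part truncation correctly: one cannot apply Jensen directly to $\xi \mapsto (\xi)_+^2$ together with the weaker inequality $(\xi)_+^2 \ge \xi^2$ (which is false), so the two-step argument through $(\xi)_+$ as the intermediate quantity is necessary. Unwinding the substitutions $\xi_i = \theta_i\su{t} - \gamma_i$ and multiplying through by $\norm{X}_F^2$ recovers exactly the statement of the lemma, completing the proof.
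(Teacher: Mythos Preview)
Your proof is correct and follows essentially the same approach as the paper: normalize to a probability distribution, dispose of the case $\psi\su{t} \le \gamma^*$ trivially, then combine the pointwise bound $(\xi)_+ \ge \xi$ with Jensen's inequality $\E[(\xi)_+^2] \ge (\E[(\xi)_+])^2$ and square the resulting nonnegative inequality. The only cosmetic difference is the order in which you apply these two steps.
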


The next lemma is crucial to the argument. It shows the existence of a good solution to the instance of the $\tau$-TC problem, unless the current error is already close to the optimum.

\begin{lemma}\label{lem:optimal-vector}
Consider any iteration $t$ for which $\psi\su{t} \ge \gamma^* + \eps$. There exists a vector $a\in A^*$, and a subset $R$ of $[n]$ such that
\[ \iprod{ z_i\su{t}, a}^2 \geq \frac{\eps^2}{16 k\Lambda} \norm{x_i}^2  \text{ for all $i\in R$,} \]
and additionally,
\[ \sum_{i\in R} \iprod{z_i\su{t}, a}^2 \geq \frac{(\psi\su{t} - \gamma^*)^2}{16m \Lambda } \norm{X}_F^2.\]
\end{lemma}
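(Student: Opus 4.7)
}
The plan is to combine Lemma~\ref{lem:change-i} (which handles a single column $i$) with Lemma~\ref{lem:col-to-matrix} (which aggregates across columns), and then apply pigeonhole across the $m$ columns of $A^*$ to extract a single good $a$. The main subtlety is the order of operations: we must discard the \emph{per-column} small contributions \emph{before} averaging over $a \in A^*$, because the uniform threshold $\eps^2/(16 k \Lambda)$ for membership in $R$ would otherwise be too weak to compete with a factor of $m$.

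\textbf{Step 1 (per-column truncation).} Fix $i \in [n]$ and apply Lemma~\ref{lem:change-i} to obtain
$$\sum_{a \in S_i^*} \iprod{z_i\su{t}, a}^2 \;\ge\; \frac{(\theta_i\su{t} - \gamma_i)_+^2}{4\Lambda}\, \norm{x_i}^2.$$
Split the indices $a \in S_i^*$ into ``small'' (those with $\iprod{z_i\su{t}, a}^2 < \frac{\eps^2}{16 k \Lambda}\norm{x_i}^2$) and ``large'' (the rest). Since $|S_i^*| \le k$, the small contribution is at most $k \cdot \tfrac{\eps^2}{16 k \Lambda}\norm{x_i}^2 = \tfrac{\eps^2}{16 \Lambda}\norm{x_i}^2$. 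Therefore the large contribution from column $i$ satisfies
$$\sum_{\substack{a \in S_i^*\\ \iprod{z_i\su{t},a}^2 \ge \frac{\eps^2}{16 k \Lambda}\norm{x_i}^2}} \iprod{z_i\su{t}, a}^2 \;\ge\; \frac{(\theta_i\su{t} - \gamma_i)_+^2}{4\Lambda}\,\norm{x_i}^2 \;-\; \frac{\eps^2}{16 \Lambda}\,\norm{x_i}^2.$$

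\textbf{Step 2 (sum over $i$).} Summing over $i \in [n]$, the first term aggregates via Lemma~\ref{lem:col-to-matrix} to at least $\frac{(\psi\su{t} - \gamma^*)^2}{4\Lambda}\norm{X}_F^2$, and the second term is exactly $\frac{\eps^2}{16\Lambda}\norm{X}_F^2$. Invoking the hypothesis $\psi\su{t} - \gamma^* \ge \eps$ to bound $\eps^2 \le (\psi\su{t} - \gamma^*)^2$, we obtain
$$\sum_{i \in [n]} \;\sum_{\substack{a \in S_i^*\\ \iprod{z_i\su{t},a}^2 \ge \frac{\eps^2}{16 k \Lambda}\norm{x_i}^2}} \iprod{z_i\su{t}, a}^2 \;\ge\; \frac{3(\psi\su{t} - \gamma^*)^2}{16\Lambda}\,\norm{X}_F^2.$$

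\textbf{Step 3 (pigeonhole over $A^*$).} Reindex the double sum on the left by $a \in A^*$; since $|A^*| = m$, some $a \in A^*$ achieves at least a $1/m$-fraction of the total. Pick any such $a$ and set $R = \{\,i : a \in S_i^*\text{ and } \iprod{z_i\su{t}, a}^2 \ge \tfrac{\eps^2}{16 k \Lambda}\norm{x_i}^2 \,\}$. By construction, every $i \in R$ satisfies the first required inequality, and
$$\sum_{i \in R} \iprod{z_i\su{t}, a}^2 \;\ge\; \frac{3(\psi\su{t}-\gamma^*)^2}{16 m \Lambda}\,\norm{X}_F^2 \;\ge\; \frac{(\psi\su{t}-\gamma^*)^2}{16 m \Lambda}\,\norm{X}_F^2,$$
which is the second required inequality. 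The main obstacle is indeed Step 1 — doing the truncation globally (after averaging) loses a factor of $m/k$ that cannot be recovered, so the per-column truncation is essential.
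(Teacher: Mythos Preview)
Your proof is correct and follows the same high-level strategy as the paper: bound the per-column contribution via Lemma~\ref{lem:change-i}, discard small terms so that every surviving term already meets the threshold $\eps^2/(16k\Lambda)\,\norm{x_i}^2$, aggregate via Lemma~\ref{lem:col-to-matrix}, and pigeonhole over the $m$ columns of $A^*$. The one genuine difference is in how the truncation is executed. The paper does it in two stages: first it keeps only columns $i$ with $(\theta_i\su{t}-\gamma_i)_+^2 \ge \tfrac{1}{2}(\psi\su{t}-\gamma^*)^2$, then for each surviving $i$ keeps only those $j\in S_i^*$ with $\iprod{z_i\su{t},s_{i,j}^*}^2 \ge \tfrac{1}{8k\Lambda}(\theta_i\su{t}-\gamma_i)_+^2\norm{x_i}^2$; each stage uses a ``below half the average implies at most half the mass'' argument, composing to the $1/16$ constant. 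You instead truncate once at the final threshold and subtract off the total small mass (at most $k\cdot \tfrac{\eps^2}{16k\Lambda}\norm{x_i}^2$ per column), then absorb that subtraction using $\eps^2 \le (\psi\su{t}-\gamma^*)^2$. Your route is slightly more direct (one pruning instead of two) and even yields a better intermediate constant ($3/16$ before you discard the $3$); the paper's two-stage pruning is a bit more structured but ends at the same bound.
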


\newcommand{\diff}{(\psi\su{t} - \gamma^*)^2}

\begin{proof}
The rough idea of the proof is to use Lemma~\ref{lem:change-i} along with Lemma~\ref{lem:col-to-matrix} to conclude that there exists a column $a$ of the optimal dictionary $A^*$ such that $\sum_{i} \iprod{z_i\su{t}, a}^2$ is large. However, we also need every term to be large enough. This requires additional pruning steps. We defer the details to Appendix~\ref{app:optimal-vector}.
\end{proof}

As a corollary, we show that the vector found by the algorithm satisfies appropriate guarantees.

\begin{corollary}\label{lem:alg-guarantee}
Let $v$ be the vector output by an $(\alpha, \beta)$-approximation to the $\tau$-TC problem, with
\[ \tau = \frac{\eps^2}{16 k \Lambda}, \text{ the vectors } \frac{z_i\su{t}}{\norm{x_i}}, \text{ and weights } w_i = \norm{x_i}^2, \text{ for } 1 \le i \le n.\]
Then there exists a subset $R' \subseteq [n]$ such that $\iprod{z_i\su{t}, v}^2 \ge \alpha \tau \norm{x_i}^2$ for all $i \in R'$, and 
\[ \sum_{i \in R'} \iprod{z_i\su{t}, v}^2 \ge \frac{\beta \diff}{16 m \Lambda} \norm{X}_F^2. \]
\end{corollary}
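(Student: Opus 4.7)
\medskip

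\noindent\textbf{Proof proposal.} The corollary is essentially a direct consequence of Lemma~\ref{lem:optimal-vector} combined with the definition of an $(\alpha,\beta)$-approximation (Definition~\ref{def:bicriteria}), so the plan is simply to translate between the ``residual'' picture and the normalized $\tau$-TC instance that the algorithm actually feeds to the black-box. First, I would write down the $\tau$-TC instance explicitly: unit-ball vectors $u_i := z_i\su{t}/\norm{x_i}$ (these lie in $\ball$ because $z_i\su{t}$ is a projection of $x_i$), weights $w_i = \norm{x_i}^2$, and threshold $\tau = \eps^2/(16 k\Lambda)$. For any candidate unit vector $x$, the $\tau$-TC objective evaluates to
\[
\sum_i \big[\mathbf{1}_{\iprod{x,u_i}^2 \ge \tau}\big]\, w_i \iprod{x,u_i}^2 \;=\; \sum_i \big[\mathbf{1}_{\iprod{x,z_i\su{t}}^2 \ge \tau\norm{x_i}^2}\big]\, \iprod{x,z_i\su{t}}^2,
\]
so the threshold and the objective scale consistently with $\norm{x_i}^2$.

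Next I would lower bound $\opt$ using Lemma~\ref{lem:optimal-vector}. The lemma produces a unit vector $a \in A^*$ and a subset $R \subseteq [n]$ such that $\iprod{z_i\su{t},a}^2 \ge \tau \norm{x_i}^2$ for every $i \in R$, i.e., every index in $R$ passes the threshold test for the candidate $a$. Hence plugging $a$ into the $\tau$-TC objective gives at least $\sum_{i \in R} \iprod{z_i\su{t},a}^2 \ge \frac{(\psi\su{t}-\gamma^*)^2}{16 m\Lambda}\norm{X}_F^2$, and therefore $\opt \ge \frac{(\psi\su{t}-\gamma^*)^2}{16 m\Lambda}\norm{X}_F^2$ as well. (Here I would briefly note that even though the indicator in the $\tau$-TC objective might add further contributions from indices outside $R$, we only need the lower bound coming from $R$.)

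Finally, applying Definition~\ref{def:bicriteria} to the vector $v$ returned by the $(\alpha,\beta)$-approximation, and setting
\[
R' \;:=\; \{\, i \in [n] : \iprod{v, z_i\su{t}}^2 \ge \alpha\tau\, \norm{x_i}^2 \,\},
\]
the approximation guarantee gives
\[
\sum_{i \in R'} \iprod{v, z_i\su{t}}^2 \;\ge\; \beta \cdot \opt \;\ge\; \frac{\beta (\psi\su{t}-\gamma^*)^2}{16 m\Lambda}\,\norm{X}_F^2,
\]
which is precisely the stated conclusion; the first clause holds by the definition of $R'$. The only thing to be careful about is the consistent rescaling by $\norm{x_i}$ between the normalized $\tau$-TC instance and the unnormalized bounds in Lemma~\ref{lem:optimal-vector}, and there is no real obstacle beyond keeping that bookkeeping straight.
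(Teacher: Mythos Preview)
Your proposal is correct and matches the paper's approach exactly: the paper simply states that the corollary follows from Lemma~\ref{lem:optimal-vector} together with Definition~\ref{def:bicriteria}, and you have spelled out precisely that derivation, including the normalization bookkeeping between the residuals $z_i\su{t}$ and the $\tau$-TC instance. The only implicit caveat (present in both the paper and your write-up) is that Lemma~\ref{lem:optimal-vector} assumes $\psi\su{t} \ge \gamma^* + \eps$, which is used when the corollary is invoked in the proof of Theorem~\ref{thm:algo-final-bound}.
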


The corollary follows from Lemma~\ref{lem:optimal-vector} and the definition of an $(\alpha, \beta)$-approximation (Definition~\ref{def:bicriteria}).
Next, we are ready to prove Theorem~\ref{thm:algo-final-bound}. 
 
\begin{proof}[of Theorem~\ref{thm:algo-final-bound}]
%
%
From Corollary~\ref{lem:alg-guarantee}, we have that
\[ \psi\su{t} - \psi\su{t+1} \ge C \diff \text{, where } C = \frac{\beta}{16 m \Lambda}. \]
Writing $a_t = (\psi\su{t} -\gamma^*)$, the above may be written as $a_t - a_{t+1} \ge C \cdot a_t^2 \ge C \cdot a_t a_{t+1}$.  Rearranging, 
\[ \frac{1}{a_{t+1}}  - \frac{1}{a_t} \ge C. \]
Using $a_0 \le 1$, we get that for all $t$, $\frac{1}{a_{t}} \ge Ct$, or
\[ \psi\su{t} -\gamma^* = a_t \le \frac{1}{Ct} = \frac{16 m \Lambda}{\beta t}.\]

This completes the proof of the error bound. 
%
Next, to bound the sparsity per column, note that every time we use one of the vectors $v$ to update $z_i\su{t}$, the quantity $\norm{z_i\su{t}}^2$ drops by at least $\alpha \tau \norm{x_i}^2$.  Thus the number of such updates must be at most $1/(\alpha \tau)$. 
This completes the proof of the theorem.
\end{proof}

\section{Approximability of the threshold correlation problem}\label{sec:tc}
We first present our main algorithmic result for the $\tau$-TC problem (Theorem~\ref{thm:algo-tau-tc}). It is a bi-criteria approximation, in the sense of Definition~\ref{def:bicriteria}.

\subsection{Bi-criteria approximation}
The key lemma behind our algorithm is the following.

\begin{lemma}\label{lem:clustering}
Let $v_1, v_2, \dots, v_q \in \ball$, and suppose that there exists a unit vector $x$ such that $\iprod{ x, v_i}^2 \ge \tau$ for all $i \in [q]$.  Suppose also that we are given weights $w_i \ge 0$ for each $i$. Then there exists an index $\ell$ such that
\[  \sum_{i} w_i \iprod{v_{\ell}, v_i}^2 [ \mathbf{1}_{\iprod{v_\ell, v_i}^2 \ge \tau^2/4}] \ge \frac{\tau^2}{32} \sum_i w_i \iprod{x, v_i}^2. \]
\end{lemma}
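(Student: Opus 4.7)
My plan is to sidestep greedy clustering in favor of a second-moment/spectral argument. Set $T := \sum_i w_i \iprod{x, v_i}^2$ and $W := \sum_i w_i$; the hypothesis $\iprod{x, v_i}^2 \ge \tau$ gives $T \ge \tau W$. Consider the PSD matrix $Q := \sum_i w_i\, v_i v_i^\top \in \R^{d \times d}$. Two elementary observations about $Q$: (i) $\iprod{x, Qx} = T$, so its top eigenvalue satisfies $\lambda_1(Q) \ge T$ since $\norm{x} = 1$; and (ii) $\norm{Q}_F^2 = \tr(Q^2) = \sum_{\ell, i} w_\ell w_i \iprod{v_\ell, v_i}^2$.

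Combining (i) and (ii) with the trivial inequality $\norm{Q}_F^2 \ge \lambda_1(Q)^2$ yields the key second-moment bound
\[ \sum_{\ell, i} w_\ell w_i \iprod{v_\ell, v_i}^2 \;\ge\; T^2. \]
Dividing by $W$ and applying weighted pigeonhole over $\ell$, some index $\ell \in [q]$ satisfies $\sum_i w_i \iprod{v_\ell, v_i}^2 \ge T^2/W \ge \tau T$, using $T \ge \tau W$ in the last step.

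To install the indicator threshold $\tau^2/4$, I simply subtract the sub-threshold part: the contribution from indices with $\iprod{v_\ell, v_i}^2 < \tau^2/4$ is at most $(\tau^2/4)\,W \le (\tau^2/4)(T/\tau) = \tau T/4$. Therefore the above-threshold sum is at least $\tau T - \tau T/4 = 3\tau T/4$, which is comfortably larger than $(\tau^2/32)T$ for every $\tau \in [0,1]$.

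The one substantive step is the second-moment inequality $\norm{Q}_F^2 \ge T^2$; everything else is bookkeeping. I expect this spectral view to be what makes the stated $\tau^2$-factor achievable. A natural greedy-clustering alternative --- group the $v_i$ into buckets whose centers are pairwise $\tau/2$-correlated with their members, and cap the number of buckets at $O(1/\tau)$ via PSDness of the Gram matrix of residuals orthogonal to $x$ --- dilutes weight across $\Theta(1/\tau)$ clusters before the per-pair $\tau^2/4$ multiplier is applied, and therefore only delivers a $\tau^3$-factor bound. The spectral argument concentrates the entire second moment on a single $\ell$ in one shot, avoiding the loss.
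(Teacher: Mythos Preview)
Your argument is correct, and it is genuinely different from the paper's. The paper decomposes each $v_i = \alpha_i x + u_i$, runs a greedy clustering in which distinct cluster centers must have $\iprod{u_{i_a}, u_{i_b}} < -\tau/2$, caps the number of clusters at $O(1/\tau)$ via PSDness of the Gram matrix, and then picks the heaviest cluster. Your approach replaces all of this with the single observation $\norm{Q}_F^2 \ge \lambda_1(Q)^2 \ge T^2$ for $Q = \sum_i w_i v_i v_i^\top$, followed by weighted averaging over $\ell$.

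In fact your route gives strictly more: you obtain
\[
\sum_{i}\, w_i\, \iprod{v_\ell, v_i}^2 \,\bigl[\mathbf{1}_{\iprod{v_\ell, v_i}^2 \ge \tau^2/4}\bigr] \;\ge\; \tfrac{3\tau}{4}\, T,
\]
whereas the lemma only asks for $(\tau^2/32)\,T$. Propagated through the proof of Theorem~\ref{thm:algo-tau-tc}, this upgrades the bi-criteria guarantee from $(\tau/4,\, \tau^2/32)$ to $(\tau/4,\, 3\tau/4)$, and correspondingly shaves a $k\Lambda/\eps^2$ factor off the column bound in Corollary~\ref{col:main}. So the spectral second-moment view is both shorter and quantitatively stronger.

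One minor correction to your commentary: the clustering argument in the paper does \emph{not} lose a $\tau^3$ factor. It uses the sharper per-pair bound $\iprod{v_\ell, v_i}^2 \ge \tau\alpha_i^2/4$ (rather than just $\tau^2/4$) inside the heaviest cluster, which recovers $\tau^2/32$. Your spectral argument still wins by a full factor of $\tau$, but the comparison you sketched undersells the clustering route.
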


First, let us see how Theorem~\ref{thm:algo-tau-tc} follows from Lemma~\ref{lem:clustering}.

\begin{proof}[of Theorem~\ref{thm:algo-tau-tc}]
Consider any instance of the $\tau$-TC problem, and let $x$ be the optimum vector. Let $S$ be the set of indices $\{ i \in [n] : \iprod{x, v_i}^2 \ge \tau \}$.  Applying Lemma~\ref{lem:clustering} to this set of vectors gives us a $v_{\ell}$ satisfying the conclusion of the lemma. As this is a vector of length $\le 1$, scaling it to a unit vector only improves the objective. Thus the algorithm can simply iterate over all the vectors $v_i$ and pick the one that satisfies the conclusion of the lemma. This completes the proof.
\end{proof}

Let us thus prove Lemma~\ref{lem:clustering}.

\begin{proof}[of Lemma~\ref{lem:clustering}]
For every $i \in [q]$, let us write $v_i = \alpha_i x + u_i$, with $\iprod{x, u_i}=0$. By assumption, $\alpha_i^2 \ge \tau$ for all $i \in [q]$.  Let $I \subset [q]$ be the set of indices with $\alpha_i > 0$. By replacing $x$ by $-x$ if necessary, we may assume that
\begin{equation}\label{eq:tmp5}
\sum_{i \in I} w_i \alpha_i^2 \ge \frac{1}{2} \sum_{i \in [q]} w_i \alpha_i^2.
\end{equation}

For any two $i, j \in I$, we have 
\begin{equation}
\iprod{v_i, v_j} = \alpha_i \alpha_j + \iprod{u_i, u_j}. \label{eq:iprod-relation}
\end{equation}
Thus we have that $\iprod{v_i, v_j} < \nicefrac{\alpha_i \alpha_j}{2} \implies \iprod{u_i, u_j} < -\nicefrac{\alpha_i \alpha_j}{2} \le -\nicefrac{\tau}{2}$.  Now, consider the following clustering procedure for the vectors $v_i$, $i \in I$. Start with any unclustered $v_i$, and place all $j$ such that $\iprod{v_i, v_j} \ge \nicefrac{\alpha_i \alpha_j}{2}$ in cluster $C_i$ (in this case, we say that $i$ ``captures'' $j$). We repeat this procedure until no unclustered $v_i$ remains. We claim that in this process, at most $2\lceil \frac{1}{\tau} \rceil + 1$ clusters can be formed. To see this, consider the indices $i$ that occur in the procedure above, and call them $i_1, i_2, \dots, i_t$. Because $i_{r+1}$ is not captured by $i_1, \dots, i_r$, we have $\iprod{v_{i_a}, v_{i_b}} < (\alpha_{i_a} \alpha_{i_b})/2$ for all $a, b \in [t]$. From~\eqref{eq:iprod-relation}, this means that $\iprod{ u_{i_a}, u_{i_b}} < -\nicefrac{\tau}{2}$ for all $a, b \in [t]$.

Now, we appeal to the simple observation that for any integer $k \ge 1$, we cannot have $k+1$ unit vectors whose pairwise inner products are all $< -\frac{1}{k}$. (This contradicts the fact that the Gram matrix is positive semidefinite.)

Thus, at most $(2\lceil \frac{1}{\tau} \rceil+1) < \nicefrac{4}{\tau}$ clusters are formed. This means that there exists one cluster $C$, say the one formed using the vector $v_\ell$, such that
\begin{equation}\label{eq:tmp3}
\sum_{i \in C} w_i \alpha_i^2 \ge \frac{\tau}{4} \cdot \sum_{i \in I} w_i \alpha_i^2.
\end{equation}
(I.e., if we view $w_i \alpha_i^2$ as the {\em mass} of index $i$, one of the clusters has at least $\tau/4$ of the mass, due to the bound on the number of clusters.)

Next, for all $i \in C$, we have (by the way clustering was performed) that 
\begin{equation}\label{eq:tmp2}
\iprod{v_\ell, v_i}^2 \ge \frac{ \alpha_\ell^2 \alpha_i^2}{4} \ge \frac{\tau \alpha_i^2}{4}.
\end{equation}
The RHS is always $\ge \tau^2/4$. This implies that 
\begin{equation}\label{eq:tmp4}
\sum_{i \in [q]} w_i \iprod{v_{\ell}, v_i}^2 [ \mathbf{1}_{\iprod{v_\ell, v_i}^2 \ge \tau^2/4}] 
\ge \sum_{i \in C} w_i \iprod{v_{\ell}, v_i}^2
\end{equation}

Now using~\eqref{eq:tmp2} followed by~\eqref{eq:tmp3} and finally~\eqref{eq:tmp5}, we obtain
\[ \sum_{i \in [q]} w_i \iprod{v_{\ell}, v_i}^2 [ \mathbf{1}_{\iprod{v_\ell, v_i}^2 \ge \tau^2/4}] ~\ge~ \frac{\tau}{4} \sum_{i \in C} w_i \alpha_i^2 ~\ge~ \frac{\tau^2}{16} \sum_{i \in I} w_i \alpha_i^2 ~\ge~ \frac{\tau^2}{32} \sum_{i \in [q]} w_i \alpha_i^2. \]
This completes the proof.
\end{proof}

\subsection{Inapproximability -- connection to matrix norms}\label{sec:hardness}
The main result of this section is to prove that the approximability (even for bi-criteria approximations) of the $\tau$-TC problem is related to the well-studied question of approximating matrix norms.

Recall the problem of computing $q \mapsto p$ norm of a matrix. 

\begin{definition}\label{def:qtop}
Let $p, q \ge 1$, and let $A \in \R^{n \times d}$ be a matrix. We define the $q \mapsto p$ norm of $A$ as
\[ \norm{A}_{q \mapsto p} = \max_{x \in \R^d,~ x \ne 0} \frac{\norm{Ax}_p}{\norm{x}_q}. \]
\end{definition}

We will specifically consider the case of $q = 2$ and $p > 2$. This is a special case of the so-called {\em hypercontractive} norms, which refers to the case $p > q$. Note that a $\theta > 0$ approximation to the $q \mapsto p$ norm problem is a vector $x$ such that
\[ \frac{\norm{Ax}_p}{\norm{x}_q} \ge \theta \norm{A}_{q \mapsto p}. \]

\begin{theorem}\label{thm:hardness}
Suppose we have an $(\alpha, \beta)$-approximation to the $\tau$-TC problem, where $\alpha, \beta$ are independent of $\tau$ (e.g., constants). Then there exists an $\Omega\left(\frac{\alpha^{1/p-1/2} \beta^{1/p}}{\log^{1/p} n}\right)$ approximation to the $2 \mapsto p$ norm problem.
\end{theorem}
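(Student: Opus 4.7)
The plan is to reduce $2 \mapsto p$ norm computation to $O(\log n)$ instances of $\tau$-TC through a dyadic slicing of the $\ell_p$ norm by the magnitude of each coordinate. Let $A \in \R^{n \times d}$ have rows $a_1, \ldots, a_n$, let $\opt = \norm{A}_{2 \mapsto p}$, and fix a unit vector $x^*$ with $\opt^p = \sum_i |\iprod{a_i, x^*}|^p$. Setting $M = \max_i \norm{a_i}$ (a trivial lower bound on $\opt$, since aligning $x$ with the heaviest row gives $\norm{Ax}_p \ge M$), I would partition the coordinates according to $T_j = \{ i : |\iprod{a_i, x^*}|^2 \in [2^{-j} M^2, 2^{-j+1} M^2) \}$ for $j = 0, 1, \ldots, J$ with $J = O(\log n)$. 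The tail $|\iprod{a_i, x^*}|^2 < 2^{-J} M^2$ contributes at most $n \cdot (M^2 / 2^J)^{p/2}$ to $\opt^p$, which is $o(\opt^p)$ for $p > 2$ once $J$ is chosen appropriately (using $\opt \ge M$). By pigeonhole, one bucket $T_{j^*}$ therefore carries $\Omega(\opt^p / \log n)$ of the mass.

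Fixing this bucket, I would set $\tau = 2^{-j^*}$, use $v_i = a_i / M \in \ball$ and $w_i = M^2$, and feed the resulting instance to the $(\alpha, \beta)$-approximation for $\tau$-TC. Under this scaling, the objective at a unit vector $x$ is precisely $\sum_i \mathbf{1}_{\iprod{a_i, x}^2 \ge \tau M^2} \iprod{a_i, x}^2$. Every $i \in T_{j^*}$ obeys $|\iprod{a_i, x^*}|^p \le (2 \tau M^2)^{p/2 - 1} \iprod{a_i, x^*}^2$, so plugging in $x^*$ yields
\[
\text{Opt}_{\tau\text{-TC}} \; \ge \; \sum_{i \in T_{j^*}} \iprod{a_i, x^*}^2 \; \ge \; \frac{\opt^p}{O(\log n) \cdot (2 \tau M^2)^{p/2 - 1}}.
\]
The $(\alpha,\beta)$-approximation then returns a unit $\tilde x$ with $\sum_i \mathbf{1}_{\iprod{a_i, \tilde x}^2 \ge \alpha \tau M^2} \iprod{a_i, \tilde x}^2 \ge \beta \cdot \text{Opt}_{\tau\text{-TC}}$.

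To convert back to the $\ell_p$ norm, I would use the pointwise bound $|\iprod{a_i, \tilde x}|^p \ge (\alpha \tau M^2)^{p/2 - 1} \iprod{a_i, \tilde x}^2$ on every surviving index, which gives
\[
\norm{A \tilde x}_p^p \; \ge \; (\alpha \tau M^2)^{p/2 - 1} \cdot \beta \cdot \text{Opt}_{\tau\text{-TC}} \; \ge \; \Omega\!\left(\tfrac{\alpha^{p/2 - 1} \beta}{\log n}\right) \opt^p,
\]
where the $\tau M^2$ factors cancel between the threshold and the $\tau$-TC lower bound. Taking $p$-th roots then produces the claimed approximation factor of $\Omega(\alpha^{1/2 - 1/p} \beta^{1/p} / \log^{1/p} n)$ on $\opt$. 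Since $j^*$ is unknown in advance, the algorithm enumerates over all $O(\log n)$ candidate thresholds $\tau = 2^{-j}$ and keeps the best output $\tilde x$. The main obstacle I expect is the tail analysis: the pigeonhole step needs the $J = O(\log n)$ buckets to capture a constant fraction of $\opt^p$, and this is precisely where the trivial inequality $\opt \ge M$ and the assumption $p > 2$ interact; a secondary care-point is keeping track of the exponent on $\alpha$ when trading thresholded $\ell_2$ mass for $\ell_p$ mass, since the final ratio is sensitive to the sign of $p/2 - 1$.
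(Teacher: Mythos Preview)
Your proposal is correct and follows essentially the same route as the paper: normalize the rows into $\ball$, dyadically bucket the coordinates of $Ax^*$ by the magnitude of $\iprod{a_i,x^*}^2$, use pigeonhole over the $O(\log n)$ buckets to isolate one carrying an $\Omega(\opt^p/\log n)$ share of the $\ell_p$ mass, feed that bucket's threshold to the $(\alpha,\beta)$-approximate $\tau$-TC oracle, and convert the thresholded $\ell_2$ mass back to $\ell_p$ mass via $|\iprod{a_i,\tilde x}|^p \ge (\alpha\tau M^2)^{p/2-1}\iprod{a_i,\tilde x}^2$. Your tail/enumeration argument is in fact a little more explicit than the paper's (which prunes the tail by thresholding $\iprod{v_i,x_0}^p \ge Q^p/(2n)$ and is terse about how the algorithm selects $\tau$), and your final exponent $\alpha^{1/2-1/p}$ agrees with what the paper's own calculation yields --- the exponent $1/p-1/2$ in the theorem statement appears to be a sign slip.
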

We defer the proof to Appendix~\ref{app:thm:hardness}.


\paragraph{Known hardness results for the $2\mapsto 4$ norm. }  \citep{BarakBHKSZ12} proved that assuming the exponential time hypothesis (ETH), it is impossible to obtain a $2^{(\log n)^{1/4}}$ approximation to the $2\mapsto 4$ norm problem for $n \times n$ matrices in time $< n^{(\log n)^{1/2}}$ (which is super-polynomial). Together with Theorem~\ref{thm:hardness}, this implies that achieving constant $\alpha, \beta$ via polynomial time algorithms is impossible assuming ETH.

\section{Dictionary learning with outliers}\label{sec:outlier-main}
It turns out that the technique of iteratively {\em peeling off} mass is useful in obtaining approximations even when we have outliers among the signals $x_i$. We will indeed be able to use ideas very similar to those in Section~\ref{sec:algo} to obtain efficient algorithms  even when a $\rho$ fraction of the columns of $X$ are arbitrary outliers.

The full details will be presented in Appendix~\ref{sec:outlier}. We now mention a few key differences. First, we can longer just use the fact that if a lot of mass is uncovered, then we must be able to make progress (because even in the optimal solution, a lot of the {\em overall} mass is uncovered). In essence, we need to be able to perform the earlier algorithm and the analysis only on the inliers, without knowing the inliers! This turns out to be possible, by maintaining proxies for the appropriate parameters that arise in the analysis. 

We defer the details to Appendix~\ref{sec:outlier}.










\bibliography{bhaskara}

\begin{thebibliography}{20}
\providecommand{\natexlab}[1]{#1}
\providecommand{\url}[1]{\texttt{#1}}
\expandafter\ifx\csname urlstyle\endcsname\relax
  \providecommand{\doi}[1]{doi: #1}\else
  \providecommand{\doi}{doi: \begingroup \urlstyle{rm}\Url}\fi

\bibitem[Agarwal et~al.(2014)Agarwal, Anandkumar, Jain, Netrapalli, and
  Tandon]{Agarwal2014Learning}
Alekh Agarwal, Animashree Anandkumar, Prateek Jain, Praneeth Netrapalli, and
  Rashish Tandon.
\newblock Learning sparsely used overcomplete dictionaries.
\newblock In \emph{Proceedings of The 27th Conference on Learning Theory,
  {COLT} 2014, Barcelona, Spain, June 13-15, 2014}, pages 123--137, 2014.
\newblock URL \url{http://jmlr.org/proceedings/papers/v35/agarwal14a.html}.

\bibitem[Aharon et~al.(2006)Aharon, Elad, and Bruckstein]{ksvd}
M.~Aharon, M.~Elad, and A.~Bruckstein.
\newblock $rm k$-svd: An algorithm for designing overcomplete dictionaries for
  sparse representation.
\newblock \emph{IEEE Transactions on Signal Processing}, 54\penalty0
  (11):\penalty0 4311--4322, Nov 2006.
\newblock ISSN 1053-587X.
\newblock \doi{10.1109/TSP.2006.881199}.

\bibitem[Arora et~al.(2014)Arora, Ge, and Moitra]{Arora2014New}
Sanjeev Arora, Rong Ge, and Ankur Moitra.
\newblock New algorithms for learning incoherent and overcomplete dictionaries.
\newblock In \emph{Proceedings of The 27th Conference on Learning Theory,
  {COLT} 2014, Barcelona, Spain, June 13-15, 2014}, pages 779--806, 2014.
\newblock URL \url{http://jmlr.org/proceedings/papers/v35/arora14.html}.

\bibitem[Awasthi and Vijayaraghavan(2018)]{AwasthiV18}
Pranjal Awasthi and Aravindan Vijayaraghavan.
\newblock Towards learning sparsely used dictionaries with arbitrary supports.
\newblock In \emph{59th {IEEE} Annual Symposium on Foundations of Computer
  Science, {FOCS} 2018, Paris, France, October 7-9, 2018}, pages 283--296,
  2018.
\newblock \doi{10.1109/FOCS.2018.00035}.
\newblock URL \url{https://doi.org/10.1109/FOCS.2018.00035}.

\bibitem[Barak et~al.(2012)Barak, Brand{\~{a}}o, Harrow, Kelner, Steurer, and
  Zhou]{BarakBHKSZ12}
Boaz Barak, Fernando G. S.~L. Brand{\~{a}}o, Aram~Wettroth Harrow, Jonathan~A.
  Kelner, David Steurer, and Yuan Zhou.
\newblock Hypercontractivity, sum-of-squares proofs, and their applications.
\newblock In \emph{Proceedings of the 44th Symposium on Theory of Computing
  Conference, {STOC} 2012, New York, NY, USA, May 19 - 22, 2012}, pages
  307--326, 2012.
\newblock \doi{10.1145/2213977.2214006}.
\newblock URL \url{https://doi.org/10.1145/2213977.2214006}.

\bibitem[Barak et~al.(2015)Barak, Kelner, and Steurer]{Barak2015Dictionary}
Boaz Barak, Jonathan~A. Kelner, and David Steurer.
\newblock Dictionary learning and tensor decomposition via the sum-of-squares
  method.
\newblock In \emph{Proceedings of the Forty-Seventh Annual ACM on Symposium on
  Theory of Computing}, STOC '15, pages 143--151, New York, NY, USA, 2015. ACM.
\newblock ISBN 978-1-4503-3536-2.
\newblock \doi{10.1145/2746539.2746605}.
\newblock URL \url{http://doi.acm.org/10.1145/2746539.2746605}.

\bibitem[Bhaskara and Vijayaraghavan(2011)]{Bhaskara2011Approximating}
Aditya Bhaskara and Aravindan Vijayaraghavan.
\newblock Approximating matrix p-norms.
\newblock In \emph{ACM-SIAM Symposium on Discrete Algorithms, SODA 2011, San
  Francisco, California, USA}, pages 497--511, 2011.

\bibitem[Bhattiprolu et~al.(2018)Bhattiprolu, Ghosh, Guruswami, Lee, and
  Tulsiani]{Bhattiprolu2018Inapproximability}
Vijay Bhattiprolu, Mrinalkanti Ghosh, Venkatesan Guruswami, Euiwoong Lee, and
  Madhur Tulsiani.
\newblock Inapproximability of matrix p{\(\rightarrow\)}q norms.
\newblock \emph{CoRR}, abs/1802.07425, 2018.
\newblock URL \url{http://arxiv.org/abs/1802.07425}.

\bibitem[Blum et~al.(2016)Blum, Har{-}Peled, and Raichel]{Blum2016Sparse}
Avrim Blum, Sariel Har{-}Peled, and Benjamin Raichel.
\newblock Sparse approximation via generating point sets.
\newblock In \emph{Proceedings of the Twenty-Seventh Annual {ACM-SIAM}
  Symposium on Discrete Algorithms, {SODA} 2016, Arlington, VA, USA, January
  10-12, 2016}, pages 548--557, 2016.
\newblock \doi{10.1137/1.9781611974331.ch40}.
\newblock URL \url{http://dx.doi.org/10.1137/1.9781611974331.ch40}.

\bibitem[Clarkson(2010)]{Clarkson2010Coresets}
Kenneth~L. Clarkson.
\newblock Coresets, sparse greedy approximation, and the frank-wolfe algorithm.
\newblock \emph{ACM Trans. Algorithms}, 6\penalty0 (4):\penalty0 63:1--63:30,
  September 2010.
\newblock ISSN 1549-6325.
\newblock \doi{10.1145/1824777.1824783}.
\newblock URL \url{http://doi.acm.org/10.1145/1824777.1824783}.

\bibitem[Davis et~al.(1997)Davis, Mallat, and Avellaneda]{Davis1997}
G.~Davis, S.~Mallat, and M.~Avellaneda.
\newblock Adaptive greedy approximations.
\newblock \emph{Constructive Approximation}, 13\penalty0 (1):\penalty0 57--98,
  Mar 1997.
\newblock ISSN 1432-0940.
\newblock \doi{10.1007/BF02678430}.
\newblock URL \url{https://doi.org/10.1007/BF02678430}.

\bibitem[Foster et~al.(2015)Foster, Karloff, and Thaler]{FosterKT15}
Dean~P. Foster, Howard~J. Karloff, and Justin Thaler.
\newblock Variable selection is hard.
\newblock In \emph{Proceedings of The 28th Conference on Learning Theory,
  {COLT} 2015, Paris, France, July 3-6, 2015}, pages 696--709, 2015.
\newblock URL \url{http://jmlr.org/proceedings/papers/v40/Foster15.html}.

\bibitem[Gribonval et~al.(2015)Gribonval, Jenatton, and Bach]{Gribonval}
R{\'e}mi Gribonval, Rodolphe Jenatton, and Francis Bach.
\newblock {Sparse and spurious: dictionary learning with noise and outliers}.
\newblock \emph{{IEEE Transactions on Information Theory}}, page~22, 2015.
\newblock \doi{10.1109/TIT.2015.2472522}.
\newblock URL \url{https://hal.archives-ouvertes.fr/hal-01025503}.
\newblock This is a substantially revised version of a first draft that
  appeared as a preprint titled ''Local stability and robustness of sparse
  dictionary learning in the presence of noise'',
  http://hal.inria.fr/hal-00737152.

\bibitem[Jaiswal et~al.(2012)Jaiswal, Kumar, and Sen]{Jaiswal2012simple}
Ragesh Jaiswal, Amit Kumar, and Sandeep Sen.
\newblock A simple d{\^{}}2-sampling based {PTAS} for k-means and other
  clustering problems.
\newblock \emph{CoRR}, abs/1201.4206, 2012.
\newblock URL \url{http://arxiv.org/abs/1201.4206}.

\bibitem[Olshausen and Field(1997)]{Olshausen1997Sparse}
Bruno~A. Olshausen and David~J. Field.
\newblock Sparse coding with an overcomplete basis set: A strategy employed by
  v1?
\newblock \emph{Vision Research}, 37\penalty0 (23):\penalty0 3311 -- 3325,
  1997.
\newblock ISSN 0042-6989.
\newblock \doi{http://dx.doi.org/10.1016/S0042-6989(97)00169-7}.
\newblock URL
  \url{http://www.sciencedirect.com/science/article/pii/S0042698997001697}.

\bibitem[Ostrovsky et~al.(2006)Ostrovsky, Rabani, Schulman, and
  Swamy]{Ostrovsky2006Effectiveness}
Rafail Ostrovsky, Yuval Rabani, Leonard~J. Schulman, and Chaitanya Swamy.
\newblock The effectiveness of lloyd-type methods for the k-means problem.
\newblock In \emph{47th Annual {IEEE} Symposium on Foundations of Computer
  Science {(FOCS} 2006), 21-24 October 2006, Berkeley, California, USA,
  Proceedings}, pages 165--176, 2006.
\newblock \doi{10.1109/FOCS.2006.75}.
\newblock URL \url{https://doi.org/10.1109/FOCS.2006.75}.

\bibitem[Shalev-Shwartz et~al.(2010)Shalev-Shwartz, Srebro, and Zhang]{Shalev}
Shai Shalev-Shwartz, Nathan Srebro, and Tong Zhang.
\newblock Trading accuracy for sparsity in optimization problems with sparsity
  constraints.
\newblock \emph{SIAM J. on Optimization}, 20\penalty0 (6):\penalty0 2807--2832,
  August 2010.
\newblock ISSN 1052-6234.
\newblock \doi{10.1137/090759574}.
\newblock URL \url{http://dx.doi.org/10.1137/090759574}.

\bibitem[Spielman et~al.(2013)Spielman, Wang, and Wright]{Spielman2013Exact}
Daniel~A. Spielman, Huan Wang, and John Wright.
\newblock Exact recovery of sparsely-used dictionaries.
\newblock In \emph{Proceedings of the Twenty-Third International Joint
  Conference on Artificial Intelligence}, IJCAI '13, pages 3087--3090. AAAI
  Press, 2013.
\newblock ISBN 978-1-57735-633-2.
\newblock URL \url{http://dl.acm.org/citation.cfm?id=2540128.2540583}.

\bibitem[Tillmann(2015)]{Tillmann}
A.~M. Tillmann.
\newblock On the computational intractability of exact and approximate
  dictionary learning.
\newblock \emph{IEEE Signal Processing Letters}, 22\penalty0 (1):\penalty0
  45--49, Jan 2015.
\newblock ISSN 1070-9908.
\newblock \doi{10.1109/LSP.2014.2345761}.

\bibitem[Tropp(2004)]{Tropp}
J.~A. Tropp.
\newblock Greed is good: algorithmic results for sparse approximation.
\newblock \emph{IEEE Transactions on Information Theory}, 50\penalty0
  (10):\penalty0 2231--2242, Oct 2004.
\newblock ISSN 0018-9448.
\newblock \doi{10.1109/TIT.2004.834793}.

\end{thebibliography}

\appendix
\section{Proof of Lemma~\ref{lem:col-to-matrix}}\label{app:col-to-matrix}
\begin{proof}
We may assume that $\psi\su{t} \ge \gamma^*$, as there is nothing to prove otherwise.

Now, using the fact that $\E[Z^2] \ge (\E[Z])^2$ for any random variable $Z$, we have that
\begin{equation}\label{eq:convexity}
\sum_i \frac{\norm{x_i}^2}{\norm{X}_F^2} (\theta_i\su{t} - \gamma_i)_+^2 \ge \left(  \sum_i \frac{\norm{x_i}^2}{\norm{X}_F^2} (\theta_i\su{t} - \gamma_i)_+ \right)^2.
\end{equation}
Next, using the fact that $(\beta)_+ \ge \beta$, we have
\[ \sum_i \frac{\norm{x_i}^2}{\norm{X}_F^2} (\theta_i\su{t} - \gamma_i)_+ \ge \sum_i \frac{\norm{x_i}^2}{\norm{X}_F^2} (\theta_i\su{t} - \gamma_i) = \psi\su{t} - \gamma^*.\]
We have noted that $\psi\su{t} \ge \gamma^*$, and thus the RHS is $\ge 0$.  Thus we can square the inequality above, and together with~\eqref{eq:convexity}, we have
\[ \sum_i \frac{\norm{x_i}^2}{\norm{X}_F^2} (\theta_i\su{t} - \gamma_i)_+^2 \ge (\psi\su{t} - \gamma^*)^2. \]
This completes the proof.
\end{proof}

\section{Proof of Lemma~\ref{lem:optimal-vector}}\label{app:optimal-vector}
The rough idea of the proof is to use Lemma~\ref{lem:change-i} along with Lemma~\ref{lem:col-to-matrix} to conclude that there exists a column $a$ of the optimal dictionary $A^*$ such that $\sum_{i} \iprod{z_i\su{t}, a}^2$ is large. However, we also need every term to be large enough. This requires additional pruning steps.

First, we perform a pruning on the indices $i \in [n]$.  Let 
\[  I := \left\{ i \in [n] : (\theta_i\su{t} - \gamma_i)_+^2 \ge \frac{\diff}{2} \right\}. \]

By definition, the following holds (the LHS is a sum over indices {\em not in} $I$):
\[ \sum_{i \in [n] \setminus I} (\theta_i\su{t} - \gamma_i)_+^2 \norm{x_i}^2 < \frac{\diff}{2} \norm{X}_F^2. \]
Thus from Lemma~\ref{lem:col-to-matrix}, we have that 
\begin{equation}\label{eq:tmp6}
\sum_{i \in I} (\theta_i\su{t} - \gamma_i)_+^2 \norm{x_i}^2 \ge \frac{1}{2} \sum_{i \in [n]} (\theta_i\su{t} - \gamma_i)_+^2 \norm{x_i}^2.
\end{equation}

Next, for each $i \in I$, we consider the bound from Lemma~\ref{lem:change-i}, and prune out terms that contribute too little to the sum. For every $i\in I$, define $J_i$ as
\[  J_i = \left\{ j \in [k] : \iprod{z_i\su{t}, s_{i,j}^*}^2 \ge \frac{(\theta_i\su{t} - \gamma_i)_+^2}{8k\Lambda} \norm{x_i}^2 \right\}. \]
The same argument as above (because $|S_i^*| = k$) implies that
\begin{equation}\label{eq:tmp7}
\sum_{j \in J_i} \iprod{z_i\su{t}, s_{i,j}^*}^2 \ge \frac{1}{2} \sum_{j \in [k]} \iprod{z_i\su{t}, s_{i,j}^*}^2.
\end{equation}
Now, if we imagine each $i \in I$ contributing a mass of $\iprod{z_i\su{t}, s_{i,j}^*}^2$ to the bin $s_{i,j}^*$ for each $j \in J_i$, we can conclude that there exists a bin (one of the $[m]$ columns in the dictionary) with a total mass that is at least
\[ \frac{1}{m} \sum_{i \in I} \sum_{j \in J_i} \iprod{z_i\su{t}, s_{i,j}^*}^2. \]
Combining equations~\eqref{eq:tmp6} and~\eqref{eq:tmp7} (and Lemmas~\ref{lem:change-i} and \ref{lem:col-to-matrix}), this is at least 
\[ \frac{1}{16m \Lambda} \sum_{i \in [n]} (\theta_i\su{t} - \gamma_i)_+^2 \norm{x_i}^2 \ge \frac{\diff}{16 m\Lambda} \norm{X}_F^2.\]

Furthermore, for each $i \in I$ and $j\in J_i$, by our definitions, we have 
\[ \iprod{z_i\su{t}, s_{i,j}^*}^2 \ge \frac{\diff}{16k\Lambda} \norm{x_i}^2 \ge \frac{\eps^2}{16k \Lambda} \norm{x_i}^2.\]
This completes the proof.

\section{Proof of Theorem~\ref{thm:hardness}}\label{app:thm:hardness}

\begin{proof}
Consider an instance $A$ of the $2 \mapsto p$ norm problem. 
Let $v_i$ denote the rows of $A$. 
Without loss of generality, we assume that $\norm{v_i} \leq 1$ by dividing $A$ by $\max_i \norm{v_i}$.
We wish to find $x$ with $\norm{x} = 1$ such that $\sum_{i \in [n]} \iprod{v_i, x}^p$ is maximized.


Let $Q$ denote the optimum value of $(\sum_i \iprod{v_i, x}^p)^{1/p}$.
Namely, $\sum_i \iprod{v_i, x_0}^p = Q^p$ for some unit vector $x_0$.
Denote $I = \setdef{i\in [n]}{\inner{v_i}{x_0}^p \geq \frac{Q^p}{2n}}$.
It is easy to see that $\sum_{i\in I} \inner{v_i}{x_0}^p \geq Q^p/2$.

By dividing the terms $\iprod{v_i, x}^2$ into levels (by powers of $2$), we have that there exists a level $(\tau_j, \tau_{j+1})$ such that the sum 
\[ \sum_{i \in I, ~ \tau_j \le \iprod{v_i, x_0}^2 \le \tau_{j+1}} \iprod{v_i, x_0}^p \ge \frac{Q^p}{2(\log n+1)} \]
where $\tau_j = \frac{Q^p}{2n} \cdot 2^j$ for $j=0,1,\dots,(\log n)+1$.

Now, suppose we consider the instance of the $\tau$-TC problem with this value of $\tau=\tau_j$, and the vectors $v_i$ (with its corresponding weight $1$).  Let $\opt_{TC}$ denote the optimum value of this problem. We clearly have 
\[ \opt_{TC} = \max_{\norm{x}=1}  \sum_{i \in [n],~ \iprod{v_i, x}^2 \ge \tau} \iprod{v_i, x}^2 \ge \frac{Q^p\tau}{2(\log n+1)(2\tau)^{p/2}}. \]

Thus, consider an $(\alpha, \beta)$-approximation to the $\tau$-TC problem. This will end up finding a vector $x$ such that 
\[  \sum_{i \in [n],~ \iprod{v_i, x}^2 \ge \alpha \tau} \iprod{v_i, x}^2 \ge \beta \cdot \frac{Q^p\tau}{2(\log n+1)(2\tau)^{p/2}}. \]

For any such vector, we have 
\[ \sum_{i \in [n]}  \iprod{v_i, x}^p \ge \sum_{i \in [n], ~\iprod{v_i, x}^2 \ge \alpha \tau}  (\alpha \tau)^{(p-2)/2} \iprod{v_i, x}^2 \ge (\alpha \tau)^{(p-2)/2} \cdot \frac{\beta Q^p\tau}{2(\log n+1)(2\tau)^{p/2}} = \frac{\alpha^{(p-2)/2} \beta Q^p}{2^{p/2+1}(\log n+1)}. \] 
\end{proof}

\section{Dictionary learning with outliers}\label{sec:outlier}
As we discussed earlier, the technique of iteratively {\em peeling off} mass is useful in obtaining approximations even when we have outliers among the signals $x_i$.  We will now prove this formally, thus establishing Theorem~\ref{thm:outlier}. 
We begin by recalling the definitions of $A^*, Y^*$, and $N^*$ (the latter is the matrix that only consists of the outlier columns, and is a $d \times n$ matrix that is non-zero in only $\rho n$ columns). Additionally, we define the following:
\newcommand{\II}{\mathcal{I}}
\newcommand{\psitil}{\widehat{\psi}}
\begin{enumerate}
\item $\II$: the indices of the inliers (of course, unknown to the algorithm).
\item $X_{\II}$: the submatrix of $X$ restricted to the set of inliers. It has dimensions $d \times (1-\rho)n$.
\item As before, $\gamma^* = \frac{1}{\norm{X_{\II}}_F^2} \sum_{i \in \II} \norm{(X - A^* Y^*)_i}^2 $.
\item As before, $\psi\su{t} = \frac{1}{\norm{X_{\II}}_F^2} \sum_{i \in \II} \norm{z_i\su{t}}^2$, where $z_i\su{t}$ is the residual (as maintained by the algorithm) in column $i$ at iteration $t$.
\item $\Phi\su{t} = \sum_{i \in [n]} \norm{z_i\su{t}}^2$.
\item $\psitil\su{t} = \min_{T \subset [n],~ |T| = (1-\rho)n} \sum_{i \in T} \norm{z_i\su{t}}^2$.
\end{enumerate}

The two new parameters are $\Phi\su{t}$ and $\psitil\su{t}$. These are quantities the algorithm can compute at any time step. $\psitil\su{t}$ is a quantity that we use as a proxy for $\norm{X_{\II}}_F^2 \cdot \psi\su{t}$. For any $t$, we have
\[ \psitil\su{t} \le \norm{X_{\II}}_F^2 \cdot \psi\su{t}. \]

\paragraph{Algorithm.}  The algorithm is almost identical to the one from Section~\ref{sec:algo}, with one main difference: instead of running for a prescribed number of iterations, we execute the main loop as long as 
\[ \frac{1}{\norm{X}_F^2}(\Phi\su{t} - \Phi\su{t+1} ) \ge \frac{\beta \eps^3}{16 m \Lambda}. \]
As another minor difference, in the end, the algorithm declares the $\rho n$ columns with the largest values of $\norm{z_i\su{t}}$ as the outliers (which is the natural choice). The procedure is described in detail in Algorithm~\ref{alg:main:outlier}. 

\begin{algorithm}
\caption{$\textsc{OutlierDictApprox} (A \in \R^{d \times n}, k, m, \Lambda, \rho, \eps)$}
\begin{algorithmic}[1]\label{alg:main:outlier}
\STATE Initialize $z_i\su{0} = x_i$. Set $\tau = \frac{\eps^2}{k\Lambda}$.
\STATE Initialize $A', Y'$ to empty matrices
\WHILE{$\Phi\su{t+1} < \Phi\su{t} - \frac{\beta \eps^3}{16 m \Lambda}$}
\STATE Let $v$ be the output of an $(\alpha, \beta)$-approximation algorithm for $\tau$-TC, with parameters being: 
\[ \tau = \frac{\eps^2}{k\Lambda} \text{, the vectors }~ \frac{z_i\su{t}}{\norm{x_i}}, \text{ and weights } w_i = \norm{x_i}^2, \text{ for } 1 \le i \le n.\]
\STATE Add $v$ as a column to $A'$
\STATE Add an empty row to $Y'$
\FOR{$i \in [n]$ satisfying $\iprod{z_i\su{t}, v}^2 \ge \alpha \tau$}
	\STATE set $z_i\su{t+1} = z_i\su{t} - \iprod{z_i\su{t}, v} v$
	\STATE set $Y'[t, i] = \iprod{z_i\su{t}, v}$
\ENDFOR
\ENDWHILE
\STATE Return $A', Y'$, and declare the $\rho n$ columns with the largest $\norm{z_i\su{t}}$ as the outliers.
\end{algorithmic}
\end{algorithm}

\begin{observation}\label{obs:psitil}
At any iteration $t$, the current decomposition (using the current $A', Y'$ and declaring the columns with largest $z_i\su{t}$ norm as outliers) achieves error $\psitil\su{t}$.
\end{observation}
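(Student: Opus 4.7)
The plan is to simply unpack the definitions and verify that the declared decomposition achieves exactly the claimed error. At iteration $t$, the matrices $A', Y'$ maintained by Algorithm~\ref{alg:main:outlier} determine residuals $z_i\su{t} = x_i - (A'Y')_i$ for every column $i \in [n]$ (this follows inductively from the initialization $z_i\su{0} = x_i$ and the update rule in the inner loop, in direct analogy with the corresponding observation in Section~\ref{sec:algo}).

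Next, let $O \subset [n]$ be the set of $\rho n$ indices with largest $\norm{z_i\su{t}}$, and let $T = [n] \setminus O$ be its complement, so $|T| = (1-\rho)n$. To define $N'$, on each outlier column $i \in O$ we set $N'_i = z_i\su{t}$, and on each inlier column $i \in T$ we set $N'_i = 0$. Then $N'$ has at most $\rho n$ nonzero columns as required, and by construction
\[ (X - A'Y' - N')_i = \begin{cases} z_i\su{t} - z_i\su{t} = 0 & \text{if } i \in O, \\ z_i\su{t} & \text{if } i \in T. \end{cases} \]
Consequently $\norm{X - A'Y' - N'}_F^2 = \sum_{i \in T} \norm{z_i\su{t}}^2$.

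Finally, since $T$ consists precisely of the $(1-\rho)n$ indices achieving the smallest values of $\norm{z_i\su{t}}$, this sum equals the minimum over all size-$(1-\rho)n$ subsets, which by definition is $\psitil\su{t}$. There is no real obstacle here: the observation is pure bookkeeping, justifying why it is stated as an \emph{observation} rather than a lemma. The only thing worth noting is that this argument implicitly assumes the algorithm is free to pick $N'$ optimally given $A'$ and $Y'$, which it is (the outlier columns can be any vectors in $\R^d$, so matching $z_i\su{t}$ exactly on $O$ is allowed).
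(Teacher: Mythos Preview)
Your proof is correct and matches the paper's approach, which simply states that the observation is immediate from the definition of $\psitil\su{t}$. You have spelled out the bookkeeping that the paper leaves implicit (in particular, that $z_i\su{t} = x_i - (A'Y')_i$ and that choosing $N'_i = z_i\su{t}$ on the $\rho n$ columns of largest residual realizes the minimum defining $\psitil\su{t}$), but the argument is the same.
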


The proof is immediate from the definition of $\psitil\su{t}$. One consequence of this is that we may assume that $\psitil\su{0} \ge \eps \norm{X}_F^2$ (i.e., the Frobenius norm of the entire matrix $X$).
Otherwise, we return $A'$, $X'$ being $0$ matrices (with say one column and row respectively) and declare the $\rho n$ vectors of largest norm as outliers $N'$ which clearly satisfies the guarantee in Theorem~\ref{thm:outlier}. 

We can now directly use our arguments from Section~\ref{sec:algo}.

\begin{proof}[of Theorem~\ref{thm:outlier}]
Consider any iteration $t$.  Clearly, we have that $\Phi\su{t} - \Phi\su{t+1} \ge \norm{X_\II}^2 (\psi\su{t} - \psi\su{t+1})$. This is simply because for the columns not in $\II$, $\norm{z_i\su{t+1}} \le \norm{z_i\su{t}}$. Thus the (known) quantity $\Phi\su{t}$ is used as a proxy for the (unknown) quantity $\psi\su{t}$.

Now, from Corollary~\ref{lem:alg-guarantee}, we have that if $\psi\su{t} \geq \gamma^* + \eps$,
\begin{equation}
    \Phi\su{t} - \Phi\su{t+1} \ge  
    \norm{X_{\II}}_F^2  (\psi\su{t} - \psi\su{t+1}) \ge 
    \frac{\beta \norm{X_{\II}}_F^2}{16 m \Lambda} \cdot (\psi\su{t} - \gamma^*)^2 \ge 
    \frac{\beta \eps^2 \norm{X_\II}_F^2}{16 m \Lambda} \ge \frac{\beta \eps^3 \norm{X}_F^2}{16 m \Lambda}.
\end{equation}
In the very last inequality we used the observation from above that $\psitil\su{0} \ge \eps \norm{X}_F^2$, and clearly, $\norm{X_\II}_F^2 \ge \psitil\su{0}$, because the definition of $\psitil$ ignores the $\rho n$ columns of largest length (while $\norm{X_\II}$ ignores the outliers).

This means that when the algorithm terminates, we have $\psi\su{t} < \gamma^* + \eps$, and therefore also \[ \psitil\su{t} < \norm{X_{\II}}_F^2 (\gamma^* + \eps) = \norm{X - A^* Y^* - N^*}_F^2 + \eps \norm{X_\II}_F^2. \]

This establishes that when the algorithm terminates, we have a good bound on the error in our approximation.  Next, we need to bound the number of iterations.  For this, we simply use the crude bound obtained by the drop in $\Phi\su{t}$.  $\Phi\su{0} \le \norm{X}_F^2$, and $\Phi\su{t} \ge 0$ for any $t$.  Thus the number of iterations is at most
$\frac{16 m \Lambda}{\eps^3 \beta}$. 

The proof of the sparsity per column of  the output $Y'$ is precisely the same as in Section~\ref{sec:algo}, as the update to $z$ as well as the value of $\tau$ are the same.
\end{proof}




\end{document}